\documentclass{article}

\usepackage{microtype}
\usepackage{graphicx}
\usepackage{subcaption}
\usepackage{booktabs} 
\usepackage{enumitem}
\usepackage{placeins}

\usepackage{algorithm}
\usepackage{algorithmic}

\usepackage{xcolor}         

\usepackage{tikz}
\usetikzlibrary{arrows.meta,positioning,fit,backgrounds,calc,shapes.geometric,decorations.pathreplacing,decorations.pathmorphing}

\definecolor{blockblue}{RGB}{41,128,185}
\definecolor{blockdarkblue}{RGB}{24,80,130}
\definecolor{gradientred}{RGB}{192,57,43}
\definecolor{gradientorange}{RGB}{230,126,34}
\definecolor{mavggreen}{RGB}{39,174,96}

\usepackage{hyperref}

\usepackage[accepted]{icml2026}

\usepackage{amsmath}
\usepackage{amssymb}
\usepackage{mathtools}
\usepackage{amsthm}

\usepackage[capitalize,noabbrev]{cleveref}

\theoremstyle{plain}
\newtheorem{theorem}{Theorem}[section]
\newtheorem{proposition}[theorem]{Proposition}
\newtheorem{lemma}[theorem]{Lemma}
\newtheorem{corollary}[theorem]{Corollary}
\theoremstyle{definition}

\newtheorem{assumption}[theorem]{Assumption}
\theoremstyle{remark}
\newtheorem{remark}[theorem]{Remark}

\usepackage[textsize=tiny]{todonotes}

\icmltitlerunning{Gradient Smoothing: Coupling Layer-wise Updates for Improved Optimization}

\begin{document}

\twocolumn[
  \icmltitle{Gradient Smoothing: Coupling Layer-wise Updates for \\Improved Optimization}



  \icmlsetsymbol{equal}{*}

  \begin{icmlauthorlist}
    \icmlauthor{Haoming Meng}{equal,yyy,vec}
    \icmlauthor{Anton Sugolov}{equal,yyy,vec}
    \icmlauthor{Vardan Papyan}{yyy,vec}
  \end{icmlauthorlist}

  \icmlaffiliation{yyy}{Department of Mathematics, University of Toronto, Toronto, Canada}
  \icmlaffiliation{vec}{Vector Institute, Toronto, Canada Code available at \url{https://github.com/sugolov/gradient-smoothing}}

  \icmlcorrespondingauthor{Haoming Meng}{haoming.meng@mail.utoronto.ca}

  \icmlkeywords{Machine Learning, ICML}

  \vskip 0.3in
]



\printAffiliationsAndNotice{\icmlEqualContribution}  

\begin{abstract}

Deep neural networks with repeated architectural blocks, such as transformers, often exhibit structured relationships across layers that emerge during training. Motivated by this observation, we introduce \emph{Depth-wise Gradient Augmentation}, a general optimization paradigm in which the update applied to each layer is obtained by transforming the collection of block-wise optimizer updates along the depth dimension. Within this framework, we study \emph{Gradient Smoothing}, a family of depth-wise smoothing methods, and instantiate it with a simple local \emph{Window Smoothing} operator. The resulting method operates directly on block-wise updates produced by arbitrary base optimizers (e.g., SGD, Adam, Muon), incurs minimal computational overhead, and is compatible with existing optimization pipelines. We evaluate Gradient Smoothing across a diverse set of architectures and training regimes, including language model pretraining, RL post-training of LLMs for reasoning, diffusion modeling, and image classification with Vision Transformers. Across these settings, Gradient Smoothing consistently improves optimization and generalization performance without modifying model architectures or training objectives. We further show that it promotes more structured representation evolution across depth, consistent with its interpretation as a structured depth-wise preconditioning method. Together, these results establish Depth-wise Gradient Augmentation as a promising framework for exploiting cross-depth structure in optimization and demonstrate Gradient Smoothing as a simple and broadly applicable instantiation.

\end{abstract}

\section{Introduction}

Many modern deep neural networks are composed of repeated architectural blocks
with shared computational structure, such as Residual Networks (ResNets)
\citep{he2015deepresiduallearningimage} and transformers
\citep{vaswani2023attentionneed}. Recent work has shown that networks of this
form often exhibit strong structural regularities across depth that emerge
during training. In particular, studies on \emph{Transformer Block Coupling}
\citep{aubry2025transformerblockcouplingcorrelation} and
\emph{Residual Alignment}
\citep{li2024residualalignmentuncoveringmechanisms} demonstrate that singular vectors of block
Jacobians and residual representations become aligned across layers,
suggesting a form of implicit coordination throughout depth. Related evidence
appears in studies of layer similarity and pruning in large language models
\citep{gromov2025unreasonableineffectivenessdeeperlayers}, as well as analyses
showing smooth evolution of information-theoretic quantities across
depth \citep{skean2024doesrepresentationmatterexploring}.


These observations suggest that repeated blocks in deep networks do not evolve independently, but instead exhibit coordinated representational and dynamical structure across depth. However, standard optimization methods typically construct and apply block-wise optimizer updates independently (aside from their dependencies through the forward and backward passes), without explicitly leveraging relationships between layers. This naturally raises the question: \emph{Can optimizer updates themselves be augmented to exploit the structure that emerges across depth?}

In this work, we introduce a general paradigm of \emph{Depth-wise Gradient Augmentation}. Rather than applying the optimizer update produced for each block in isolation, the update applied to each layer is obtained by applying a depth-wise augmentation operator to the collection of block-wise optimizer updates. This perspective encompasses a broad class of update transformations acting along the depth dimension.

As a first instantiation of this framework, we study \emph{Gradient Smoothing}, motivated by the empirical observation that neighboring layers often learn related representations and exhibit increasingly aligned optimization dynamics throughout training. We focus on a particularly simple realization, \emph{Window Smoothing}, which augments each block update by averaging optimizer updates from a local window of neighboring layers.
Window Smoothing is simple to implement, incurs minimal computational overhead, and is compatible with arbitrary base optimizers. Moreover, Gradient Smoothing admits a natural interpretation as a structured preconditioning method acting on block-structured parameter spaces, while providing a foundation for exploring richer forms of depth-wise update augmentation.

\paragraph{Contributions.}

\begin{enumerate}
     \item We introduce \emph{Depth-wise Gradient Augmentation}, a general optimization paradigm in which the collection of block-wise optimizer updates is transformed by a depth-wise augmentation operator before being applied. Within this framework, we study \emph{Gradient Smoothing}, a family of structured depth-wise smoothing methods, and instantiate it with a simple, optimizer-agnostic \emph{Window Smoothing} operator, providing a foundation for exploring more general forms of depth-wise update augmentation.

    \item We demonstrate that Gradient Smoothing consistently improves optimization and generalization across a diverse set of architectures and training regimes, including language model pretraining, RL fine-tuning of LLMs for reasoning, supervised learning with Vision Transformers, and diffusion models.

    \item We provide both empirical and theoretical evidence that Gradient Smoothing promotes more structured representation evolution across depth. Empirically, we observe increased layer-wise trajectory alignment and representation residual similarity, while theoretically we characterize how smoothing influences representational residual structure.


\end{enumerate}

\section{Background and Setup}

\subsection{Block-Structured Optimization}
We consider neural networks composed of $L$ repeated blocks connected by skip connections, where each block shares the same (or similar) functional form but has its own parameters. In addition, the model may include a (typically small) collection of parameters not associated with the repeated structure, such as input embeddings or output heads.

Formally, let $F:\mathbb{R}^d \times \mathbb{R}^p \to \mathbb{R}^d$ denote the residual mapping associated with a single block. For block $\ell$, the residual function is given by $F(\cdot;\theta_\ell)$, where $\theta_\ell \in \mathbb{R}^p$ are layer-specific parameters. Given the initial embedding $h_0$, the hidden states $(h_\ell)_{\ell=1}^{L} \subset \mathbb{R}^d$ then evolve according to
\begin{equation}
\label{eq:residual_model}
h_{\ell+1} = h_\ell + F(h_\ell;\theta_\ell),
\qquad \ell = 1,\dots,L-1.
\end{equation}
 Although we present the formulation for standard residual connections, the framework naturally extends to architectures with more general skip connection patterns. We write the full parameter vector as
\[
(\theta, \phi), \qquad 
\theta = (\theta_1, \dots, \theta_L), \quad \theta_l \in \mathbb{R}^{p},
\]
where $\phi\in\mathbb{R}^{D_{\phi}}$ collects all non-block parameters.
This setting encompasses a broad class of modern architectures, including residual networks with repeated block structure, Vision Transformers, and Transformer-based language models (by letting each $h_{\ell}$ be a sequence of vectors).
We denote the training objective by $\mathcal{L}(\theta,\phi)$ and assume it is differentiable.

At optimization step $t$, the block-wise gradients are given by
\[
g_l^{(t)} := \nabla_{\theta_l} \mathcal{L}(\theta^{(t)},\phi^{(t)}), 
\qquad 
g^{(t)} := (g_1^{(t)}, \dots, g_L^{(t)}),
\]
and we denote by $g_\phi^{(t)} := \nabla_\phi \mathcal{L}(\theta^{(t)},\phi^{(t)})$ the gradient with respect to $\phi$.

Standard first-order methods treat the block-wise gradients $\{g_l^{(t)}\}_{l=1}^L$ independently.
However, repeated-block architectures induce strong structural relationships across depth: adjacent blocks often learn similar transformations and exhibit aligned representations, particularly after an initial transient phase of training.
This suggests that the stacked gradient vector $g^{(t)}$ possesses meaningful structure along the depth dimension.

\begin{figure}[h!]
\centering
\resizebox{\columnwidth}{!}{%
\begin{tikzpicture}[
	node distance=0.6cm and 1.5cm,
	scale=0.85, transform shape,
	block/.style={draw, fill=blockblue!40, minimum width=1.6cm, minimum height=0.8cm, align=center, rounded corners},
	gradient/.style={draw=gradientred!80, fill=gradientred!30, align=center, minimum width=2.2cm, minimum height=0.8cm, rounded corners},
	filter/.style={draw=mavggreen!80, fill=mavggreen!30, align=center, minimum width=2cm, minimum height=3cm, rounded corners},
	output/.style={draw=gradientorange!80, fill=gradientorange!30, align=center, minimum width=2.2cm, minimum height=0.8cm, rounded corners},
	arrow/.style={-{Stealth[length=3mm, width=2mm]}, line width=0.8pt},
	thick-arrow/.style={-{Stealth[length=4mm, width=2.5mm]}, line width=1pt},
	block-arrow/.style={-{Stealth[length=3.5mm, width=2.2mm]}, line width=1pt, color=blockdarkblue},
	bracket/.style={decorate, decoration={brace, amplitude=10pt, raise=2pt}, thick},
]

\node[block] (blockL) at (-2,3) {$f_L$};
\node[block] (block3) at (-2,0) {$f_3$};
\node[block] (block2) at (-2,-1.5) {$f_2$};
\node[block] (block1) at (-2,-3) {$f_1$};

\node at (-2,1.5) {$\vdots$};

\node[gradient, right=of blockL] (gradL)
  {$\mathcal{U}\!\left(\nabla_{\theta_L} L\right)$};
\node[gradient, right=of block3] (grad3)
  {$\mathcal{U}\!\left(\nabla_{\theta_3} L\right)$};
\node[gradient, right=of block2] (grad2)
  {$\mathcal{U}\!\left(\nabla_{\theta_2} L\right)$};
\node[gradient, right=of block1] (grad1)
  {$\mathcal{U}\!\left(\nabla_{\theta_1} L\right)$};

\node at (1.5,1.5) {$\vdots$};

\draw[arrow] (blockL) -- (gradL);
\draw[arrow] (block3) -- (grad3);
\draw[arrow] (block2) -- (grad2);
\draw[arrow] (block1) -- (grad1);

\draw[arrow] (block1) -- (block2);
\draw[arrow] (block2) -- (block3);
\draw[arrow] (block3) -- ($(block3)+(0,1)$);
\draw[arrow] ($(block3)+(0,2)$) -- (blockL);

\node[filter, right=2cm of grad3] (filter) {\\\\\\Filter $S$};

\node[output, right=6cm of gradL] (filtgradL)
  {$\tilde{\mathcal{U}}\!\left(\nabla_{\theta_L} L\right)$};
\node[output, right=6cm of grad3] (filtgrad3)
  {$\tilde{\mathcal{U}}\!\left(\nabla_{\theta_3} L\right)$};
\node[output, right=6cm of grad2] (filtgrad2)
  {$\tilde{\mathcal{U}}\!\left(\nabla_{\theta_2} L\right)$};
\node[output, right=6cm of grad1] (filtgrad1)
  {$\tilde{\mathcal{U}}\!\left(\nabla_{\theta_1} L\right)$};

\node at (9.5,1.5) {$\vdots$};

\begin{scope}[shift={($(filter.north)+(0,-1.3)$)}, scale=0.65]
	\fill[black!80] (-0.15,0) rectangle (0.15,0.7);

	\fill[black!80] (0.15,0) rectangle (0.45,0.7*0.606);
	\fill[black!80] (0.45,0) rectangle (0.75,0.7*0.368);
	\fill[black!80] (0.75,0) rectangle (1.05,0.7*0.223);
	\fill[black!80] (1.05,0) rectangle (1.35,0.7*0.135);

	\fill[black!80] (-0.45,0) rectangle (-0.15,0.7*0.606);
	\fill[black!80] (-0.75,0) rectangle (-0.45,0.7*0.368);
	\fill[black!80] (-1.05,0) rectangle (-0.75,0.7*0.223);
	\fill[black!80] (-1.35,0) rectangle (-1.05,0.7*0.135);

	\draw (-1.5,0) -- (1.5,0);
\end{scope}

\draw[bracket] ($(gradL.east)+(0.2,0.5)$) -- ($(grad1.east)+(0.2,-0.5)$);
\draw[bracket, decoration={brace, mirror, amplitude=10pt, raise=2pt}]
($(filtgradL.west)+(-0.2,0.5)$) -- ($(filtgrad1.west)+(-0.2,-0.5)$);

\draw[thick-arrow] ($(grad3.east)+(0.8,0)$) -- ($(filter.west)+(-0.2,0)$);
\draw[thick-arrow] ($(filter.east)+(0.2,0)$) -- ($(filtgrad3.west)+(-0.8,0)$);

\end{tikzpicture}
}
\caption{ {\bf Gradient Smoothing.} 
Representation of the gradient augmentation scheme applied across depth to the updates in a deep network after backpropagation. For a deep network with $L$ identical architectural blocks (but with different parameters), the gradient updates in each parameter block $\theta^l$ are reweighted across depth to stabilize information propagation.
}
\end{figure}

\begin{figure*}[h]
    \centering




        \begin{subfigure}[t]{0.32\linewidth}
        \centering
        \includegraphics[width=\linewidth]{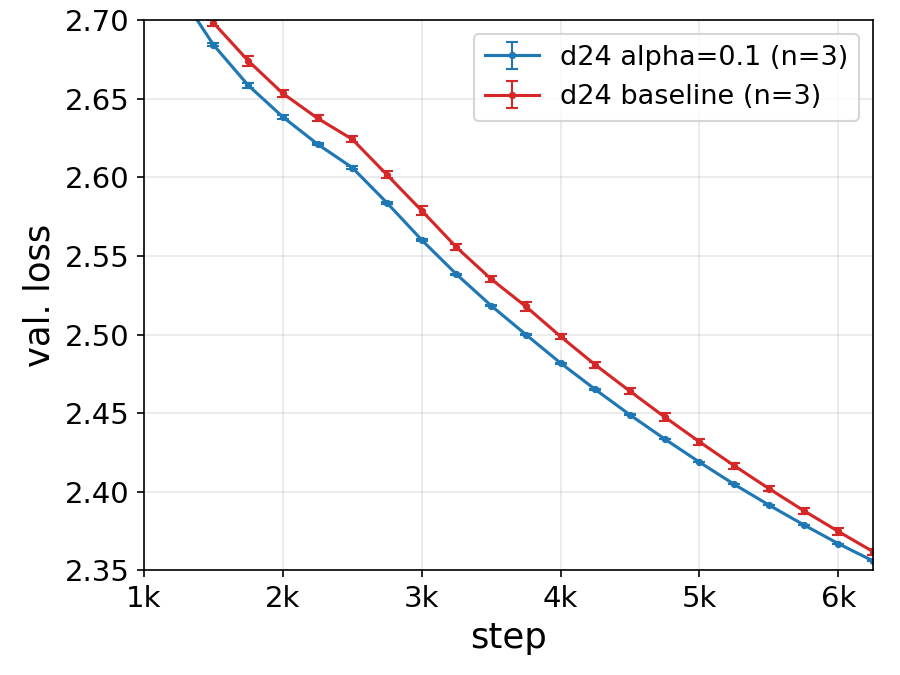}
        \caption{Depth $24$: Validation Loss.}
        \label{fig:nanochat_d24_val}
    \end{subfigure}
    \hfill
    \begin{subfigure}[t]{0.32\linewidth}
        \centering
        \includegraphics[width=\linewidth]{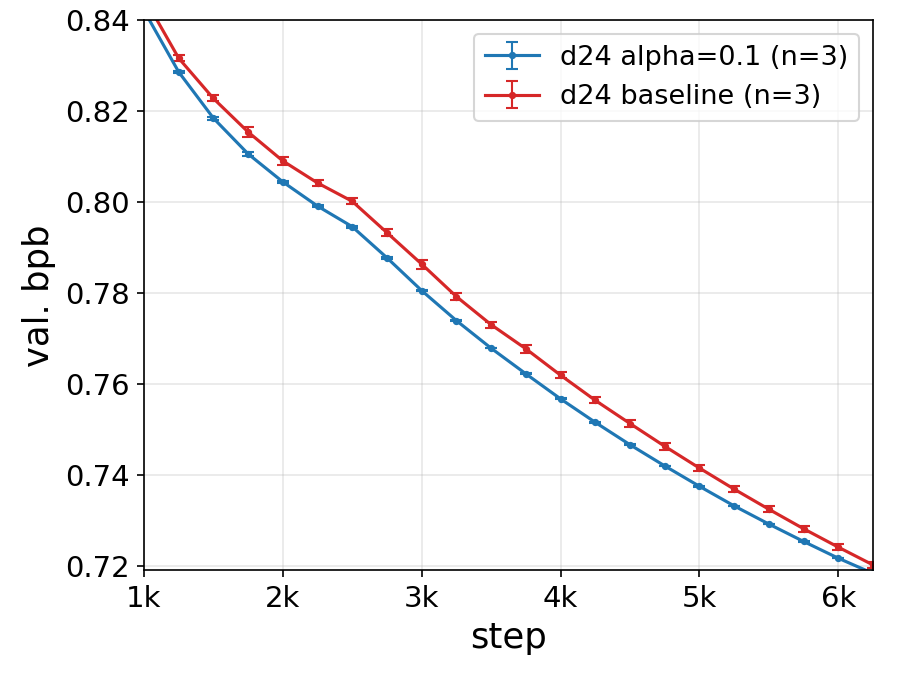}
        \caption{Depth $24$: Validation BPB.}
        \label{fig:nanochat_d24_bpb}
    \end{subfigure}
    \hfill
    \begin{subfigure}[t]{0.32\linewidth}
        \centering
        \includegraphics[width=\linewidth]{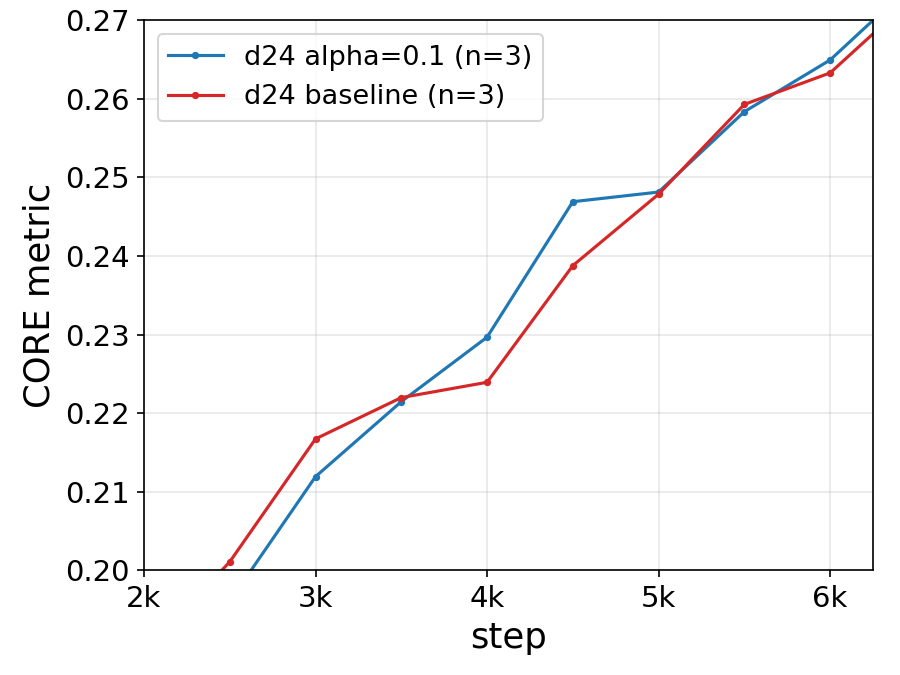}
        \caption{Depth $24$: CORE Metric.}
        \label{fig:nanochat_d24_core}
    \end{subfigure}

    \vspace{2mm}

    \begin{subfigure}[t]{0.32\linewidth}
        \centering
        \includegraphics[width=\linewidth]{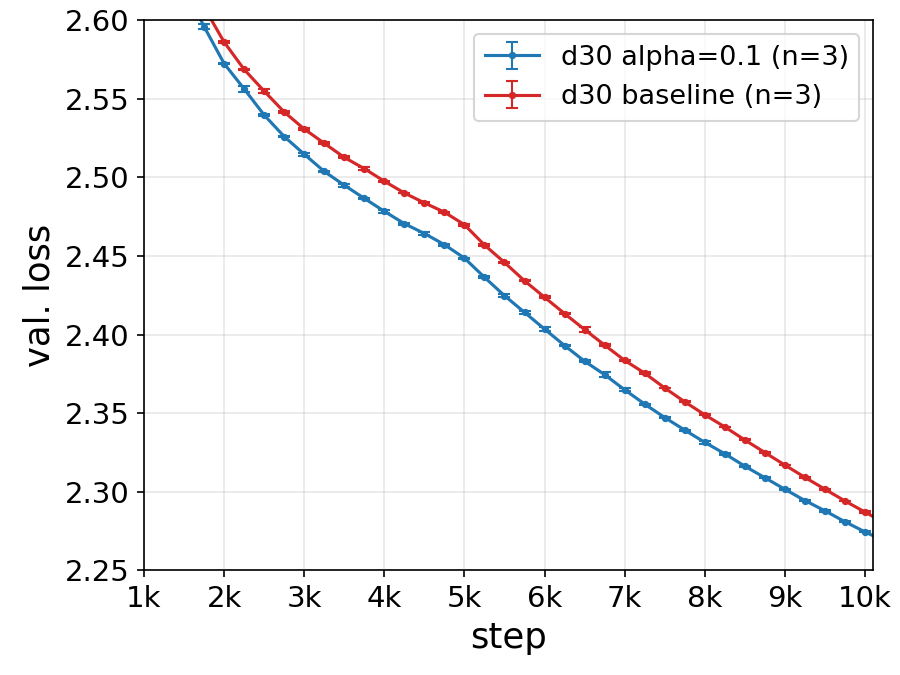}
        \caption{Depth $30$: Validation Loss.}
        \label{fig:nanochat_d30_val}
    \end{subfigure}
    \hfill
    \begin{subfigure}[t]{0.32\linewidth}
        \centering
        \includegraphics[width=\linewidth]{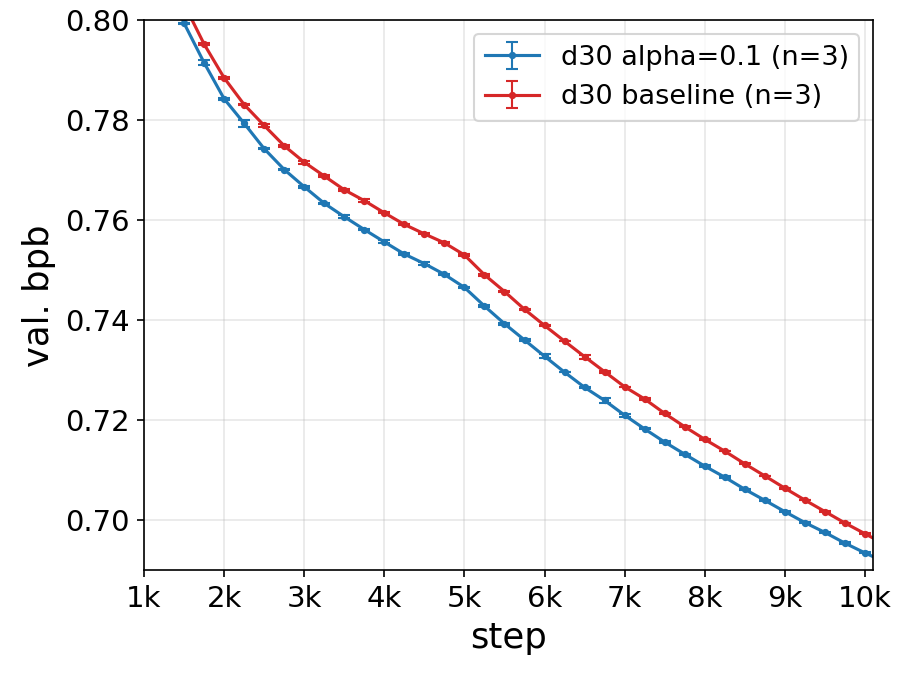}
        \caption{Depth $30$: Validation BPB.}
        \label{fig:nanochat_d30_bpb}
    \end{subfigure}
    \hfill
    \begin{subfigure}[t]{0.32\linewidth}
        \centering
        \includegraphics[width=\linewidth]{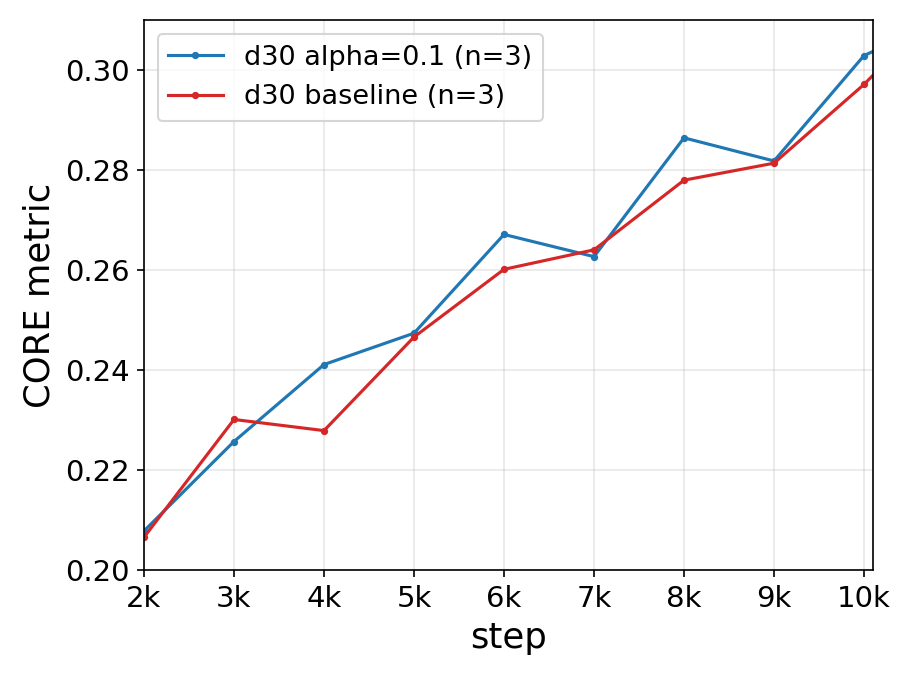}
        \caption{Depth $30$: CORE Metric.}
        \label{fig:nanochat_d30_core}
    \end{subfigure}

    \vspace{-1mm}
    \caption{{\bf Nanochat pretraining with Gradient Smoothing.}
    Validation loss, BPB, and DCLM CORE metric during \texttt{nanochat} pretraining of GPT2 with the default Adam\,+\,NorMuon optimizer setup (Table \ref{tab:nanochat-config}). We compare the baseline against a run with identical hyperparameters using depth-wise update smoothing applied to the transformer blocks. Models of depth 24 (1.38B parameters, top row) and depth 30 (2.40B parameters bottom row) with $\alpha = 0.1$ are replicated against the baseline for $\textbf{seed} \in \{42, 67, 2026\}$.
    At both depths, smoothing accelerates convergence of validation loss and improves the CORE metric during training, with a larger gap visible at depth $30$. } 
    \label{fig:nanochat_combined}
\end{figure*}

\subsection{General First-Order Updates}

Rather than committing to a specific optimizer, we consider a general first-order update rule of the form
\[
(\theta^{(t+1)},\phi^{(t+1)}) 
= (\theta^{(t)},\phi^{(t)}) - \eta\, (u^{(t)}, u_\phi^{(t)}),
\]
where
\[
u^{(t)} := \mathcal{U}^{(t)}(g^{(t)}), 
\qquad 
u_\phi^{(t)} := \mathcal{U}_\phi^{(t)}(g_\phi^{(t)})
\]
denote the updates produced by the base optimizer for the block and non-block parameters, respectively.

This formulation includes gradient descent ($u^{(t)} = g^{(t)}$), momentum methods, adaptive optimizers such as Adam or AdamW, as well as Muon and related variants. Importantly, $u^{(t)}$ may depend on both the current gradient and internal optimizer state, and may involve nonlinear or time-varying transformations of $g^{(t)}$. 

This optimizer-agnostic viewpoint allows us to study update augmentation directly at the level of optimizer updates, independent of the specific mechanisms used to construct them. As a result, the framework can be incorporated as a lightweight, modular extension to a broad class of existing optimizers.


\section{Depth-wise Gradient Augmentation}

\subsection{General Framework}
\label{subsec:coupling_depth}

Networks built from repeated architectural blocks develop strong cross-depth structure in their representations during training. Motivated by this observation, we introduce \emph{Depth-wise Gradient Augmentation}, a general optimization paradigm in which the update applied to each layer is obtained by transforming the collection of block-wise optimizer updates along the depth axis. This perspective provides a unified framework for incorporating cross-depth information into optimizer updates while remaining agnostic to the particular transformation used.


At iteration $t$, let
\[
u_\ell^{(t)} \in \mathbb{R}^{p},
\qquad \ell=1,\ldots,L,
\]
denote the block-wise update produced by a base optimizer
(e.g., SGD, Adam, Muon) for the parameters of layer $\ell$, and write
\[
u^{(t)} = (u_1^{(t)}, \dots, u_L^{(t)})
\in (\mathbb{R}^{p})^{L},
\]
together with the corresponding update $u_\phi^{(t)}$ for the non-block parameters.

In its most general form, Depth-wise Gradient Augmentation applies an operator
along the depth dimension independently for each within-block coordinate. For
each coordinate $i=1,\ldots,p$, let
\[
\mathcal{C}^{(i)}:\mathbb{R}^{L}\rightarrow\mathbb{R}^{L},
\]
be a depth-wise augmentation operator. Then
\[
\bigl(\tilde u_{1}^{(i)},\ldots,\tilde u_{L}^{(i)}\bigr)
=
\mathcal{C}^{(i)}
\bigl(u_{1}^{(i)},\ldots,u_{L}^{(i)}\bigr).
\]
Equivalently,
\[
\tilde u_{\ell}^{(i)}
=
\mathcal{C}^{(i)}_{\ell}
\bigl(u_{1}^{(i)},\ldots,u_{L}^{(i)}\bigr),
\qquad
\ell=1,\ldots,L.
\]
Thus, each coordinate of the update for layer $\ell$ may depend only on the corresponding coordinate across layers, while remaining independent of all other within-block coordinates.


Among the many possible choices of depth-wise augmentation operators, we focus on operators that smooth optimizer updates across depth.



\subsection{Gradient Smoothing}

A natural class of Depth-wise Gradient Augmentation methods is obtained by choosing the coordinate-wise depth augmentation operators to be a common linear smoothing operator. We refer to this family of methods as \emph{Gradient Smoothing}. Specifically, we take
\[
\mathcal{C}^{(i)} = S,
\qquad i=1,\ldots,p,
\]
where
$
S \in \mathbb{R}^{L \times L}
$
is a linear operator acting on the layer index. Equivalently, Gradient Smoothing is represented by the block-structured operator
\[
P := S \otimes I
\in \mathbb{R}^{D_\theta \times D_\theta},
\]
where $I$ denotes the identity operator on the within-block parameter space
$\mathbb{R}^p$.

The transformed block-wise update is then given by
\[
\tilde u^{(t)}
:=
P\,u^{(t)}
=
(S \otimes I)\,u^{(t)}.
\]

To describe the update on the full parameter vector $(\theta,\phi)$, we lift
$P$ to the block-diagonal operator
\[
\bar P
:=
\begin{pmatrix}
P & 0 \\
0 & I_\phi
\end{pmatrix}
\in
\mathbb{R}^{(D_\theta + D_\phi)\times(D_\theta + D_\phi)},
\]
where $I_\phi$ denotes the identity operator on the $\phi$-coordinates.

Given the full optimizer update
\[
u_{\mathrm{full}}^{(t)}
:=
\bigl(u^{(t)},\,u_\phi^{(t)}\bigr),
\]
Gradient Smoothing applies $\bar P$ to obtain
\[
\tilde u_{\mathrm{full}}^{(t)}
=
\bar P\,u_{\mathrm{full}}^{(t)}
=
\bigl(Pu^{(t)},\,u_\phi^{(t)}\bigr).
\]

The resulting optimization step becomes
\[
\theta^{(t+1)}
=
\theta^{(t)}
-
\eta\,\tilde u^{(t)},
\qquad
\phi^{(t+1)}
=
\phi^{(t)}
-
\eta\,u_\phi^{(t)}.
\]

Gradient Smoothing thus corresponds to a structured depth-wise coupling of
optimizer updates, or equivalently, a structured preconditioning operator acting
on the block-wise updates. Importantly, this construction is independent
of the specific choice of base optimizer and operates purely at the level of
update vectors.

\subsection{Smoothing Operator}
\label{sec:smooth_operators}

In this work, we instantiate Gradient Smoothing with a simple local averaging operator, which we term \emph{Window Smoothing}. Taking $S$ to be a local window averaging operator across depth, the smoothed update is given by
\begin{equation}
\tilde u_l =
\begin{cases}
\big(1-\tfrac{\alpha}{2}\big)u_1 + \tfrac{\alpha}{2}u_2, & l=1,\\[4pt]
(1-\alpha)u_l + \tfrac{\alpha}{2}(u_{l-1}+u_{l+1}), & 2\le l\le L-1,\\[4pt]
\big(1-\tfrac{\alpha}{2}\big)u_L + \tfrac{\alpha}{2}u_{L-1}, & l=L,
\end{cases}
\label{eq:local_avg}
\end{equation}
where $\alpha\in[0,1)$ controls the smoothing strength. Each block's update is replaced by a convex combination of its own update and those of its immediate neighbors in depth; the boundary cases at $l=1$ and $l=L$ use a half-weight on the single available neighbor so that $S$ remains row-stochastic ($S\mathbf{1}=\mathbf{1}$). The resulting $S\in\mathbb{R}^{L\times L}$ is a symmetric tridiagonal matrix and introduces no additional trainable parameters.

Many other depth-wise weighting schemes are possible within this framework. For instance, exponential smoothing, higher-order tridiagonal kernels, and local window averaging with larger window sizes all fit the same general paradigm. Our work focuses on local window averaging as a simple yet effective instantiation of Gradient Smoothing, applicable to many optimizers and across broad tasks.

\begin{algorithm}[tb]
   \caption{Gradient Smoothing (window, strength $\alpha$)}
   \label{alg:gradient-smoothing}
\begin{algorithmic}
   \STATE {\bfseries Input:} block weights $\{\boldsymbol{\theta}_\ell\}_{\ell=1}^{L}$,
      non-block weights $\boldsymbol{\phi}$, loss $L$,
      base optimizer $\mathcal{U}$,
      learning rate $\eta$, weight decay $\lambda$,
      smoothing strength $\alpha \in [0,1)$
   \FOR{$t=1, 2, \ldots$}
      \STATE $\mathbf{u}_\ell \gets \mathcal{U}_\theta\!\left(\nabla_{\boldsymbol{\theta}_\ell} L\right)$
         \hfill $\ell = 1,\ldots,L$
      \STATE $\mathbf{u}_\phi \gets \mathcal{U}_\phi\!\left(\nabla_{\boldsymbol{\phi}} L\right)$
      \STATE {\color{teal} $\mathbf{u}_\ell \gets
         \dfrac{\mathbf{u}_\ell}{\|\mathbf{u}_\ell\|_2}\,$
         \hfill optional normalization}
      \STATE {\color{blue} $\tilde{\mathbf{u}}_1 \gets
         \left(1-\tfrac{\alpha}{2}\right)\mathbf{u}_1
         + \tfrac{\alpha}{2}\,\mathbf{u}_2$}
      \STATE {\color{blue} $\tilde{\mathbf{u}}_\ell \gets
         (1-\alpha)\,\mathbf{u}_\ell
         + \tfrac{\alpha}{2}\!\left(\mathbf{u}_{\ell-1}+\mathbf{u}_{\ell+1}\right)$
         \hfill $\ell = 2,\ldots,L{-}1$}
      \STATE {\color{blue} $\tilde{\mathbf{u}}_L \gets
         \left(1-\tfrac{\alpha}{2}\right)\mathbf{u}_L
         + \tfrac{\alpha}{2}\,\mathbf{u}_{L-1}$}
      \STATE {\color{violet} $\tilde{\mathbf{u}}_\ell \gets
         \dfrac{\|\mathbf{u}_\ell\|_2}{\|\tilde{\mathbf{u}}_\ell\|_2}\,
         \tilde{\mathbf{u}}_\ell$
         \hfill optional norm pres.}
      \STATE $\boldsymbol{\theta}_\ell \gets \boldsymbol{\theta}_\ell
         {  \color{red} -\eta\lambda\,\boldsymbol{\theta}_\ell}  - \eta\,\tilde{\mathbf{u}}_\ell$
         \hfill {\color{red} optional w.d. decouple} 
      \STATE $\boldsymbol{\phi} \gets \boldsymbol{\phi} - \eta\,\mathbf{u}_\phi$
   \ENDFOR
\end{algorithmic}
\end{algorithm}

\subsection{Normalization Variants}

We additionally consider two normalization variants that separate the effects of smoothing on update directions and update magnitudes.

\begin{enumerate}[label=\roman*)]
    \item \textbf{Norm-Preserving Smoothing.}
    After applying the smoothing operator, each block update is rescaled to match the norm of the original optimizer update:
    \[
    \tilde u_\ell
    =
    \frac{\|u_\ell\|_2}
         {\|(Pu)_\ell\|_2}
    (Pu)_\ell.
    \]
    This preserves the magnitude of each block update while allowing smoothing to modify only its direction.

    \item \textbf{Directional Smoothing.}
    Before smoothing, each block update is normalized to unit norm, so that smoothing depends only on update directions. The resulting smoothed direction is then rescaled to the original update norm:
    \[
    \tilde u_\ell
    =
    \frac{\|u_\ell\|_2\,
    \bigl(P(u_1/\|u_1\|_2,\ldots,u_L/\|u_L\|_2)\bigr)_\ell}
    {\left\|
    \bigl(P(u_1/\|u_1\|_2,\ldots,u_L/\|u_L\|_2)\bigr)_\ell
    \right\|_2}.
    \]
    This variant performs smoothing entirely in the space of update directions while preserving the original update magnitude.
\end{enumerate}
Throughout the paper, we distinguish three normalization variants: \emph{Standard} (no normalization), \emph{Norm} (Norm-Preserving Smoothing), and \emph{Dir} (Directional Smoothing).

\section{Experiments}

We evaluate Gradient Smoothing across four distinct training regimes that stress different aspects of optimization: (i) RL fine-tuning of LLMs on reasoning tasks, (ii) language model pretraining, (iii) supervised image classification with Vision Transformers, and (iv) diffusion model training. To demonstrate compatibility with existing training setups, in all settings we compare against standard tuned baselines while keeping the underlying training \emph{hyperparameters unchanged when applying smoothing}. All base-optimizer hyperparameters are inherited unchanged from the tuned baselines, suggesting that further gains may be possible with dedicated tuning for smoothing. Because the method operates solely on parameter updates, it requires no architectural modifications and introduces negligible computational overhead, making it easy to integrate across a wide range of models and training regimes.

\subsection{Experimental Setup}

\paragraph{Gradient Smoothing Configuration.}
Gradient smoothing is applied across block-wise updates at each optimization step (Section \ref{sec:smooth_operators}) with $\alpha$-window averaging across depth. 
In addition to experimenting with $\alpha$-window settings, we test standard, norm-preserving, and directional variants for update smoothing. 

We also consider varied parameter groups within each block that are affected by any update smoothing operations.   
In Transformers/ViTs, most capacity and gradient signal flows through linear projections (attention $W_Q,W_K,W_V,W_O$ and MLP layers),
while normalization and embedding parameters are lower-dimensional and often behave differently under optimization.
Specifically we distinguish between
(i) smoothing applied to \emph{all} block parameters (which we refer to as \textit{Full} in Section~\ref{subsec:exp_results}),
(ii) smoothing applied \emph{only} to linear layers within each block (the non-normalization layers in the models we use) which we call \textit{Proj}.

Gradient Smoothing follows the weight decay convention of the underlying base optimizer. For optimizers with decoupled weight decay (e.g., AdamW), smoothing is applied only to the update produced by the loss gradients, with the weight decay term applied separately.

\begin{figure}
    \centering
    \includegraphics[width=0.9\linewidth]{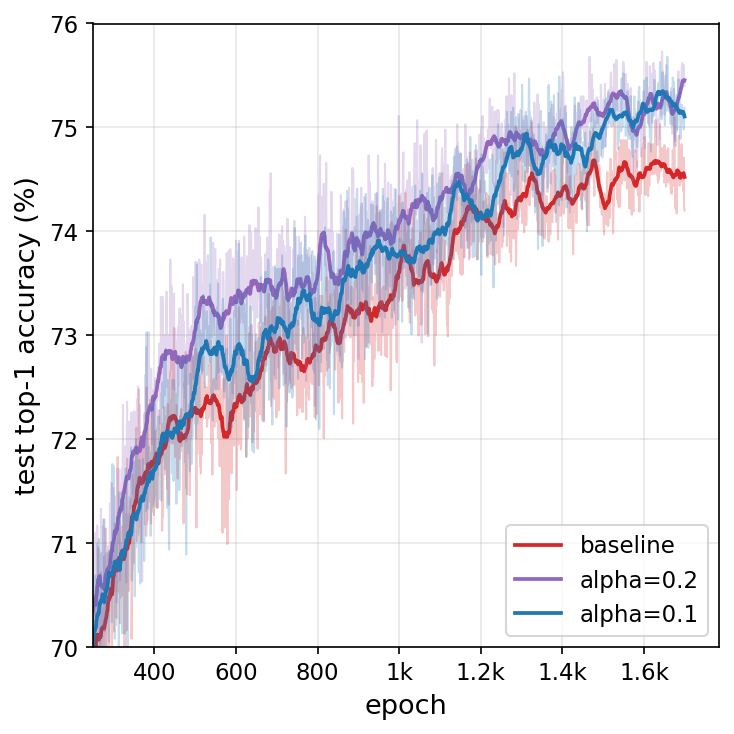}
    \caption{{\bf Test accuracy improvement with smoothing.} Test accuracy of ViT-B trained on CIFAR-100 with DeiT training recipe with data augmentations for 1700 epochs. We compare baseline training ($\alpha = 0$, red) against window smoothing with $\alpha = 0.1$ (blue) and $\alpha = 0.2$ (purple). Solid lines show running average trendlines (25 epochs) for clarity. Both smoothing configurations outperform baseline, with $\alpha = 0.2$ achieving a final test accuracy of 75.62\% compared to the baseline 74.56\% (Table~\ref{tab:vitb_cifar100}), and suggests that moderate gradient smoothing provides a consistent performance benefit throughout training.}
    \label{fig:test_acc_cifar100}
\end{figure}

\paragraph{RLVR for LLM Reasoning:} The base model we use is \texttt{DeepSeek-R1-Distill-Qwen-1.5B} \citep{Guo_2025, yang2024qwen2technicalreport}. We train with Group Relative Policy Optimization (GRPO) \citep{shao2024deepseekmathpushinglimitsmathematical} on the dataset from Open-RS \citep{dang2026reinforcementlearningreasoningsmall} (without the cosine response length reward), and then evaluate on downstream mathematical reasoning tasks: AIME24, AIME25, AMC23, MATH-500. We train the model for up to 200 optimization steps with AdamW using a base learning rate of $10^{-6}$ with a cosine learning-rate schedule and a minimum learning-rate ratio of $0.1$, including a linear warmup over the first $10\%$ of training steps. Evaluation sampling is done with temperature $0.6$ and top\_p $0.95$.

\paragraph{LLM Pretraining:}
We further evaluate gradient smoothing in a language-model pretraining setting using the
\texttt{nanochat} recipe \citep{nanochat}. \texttt{nanochat} provides a compact
end-to-end GPT-style pretraining pipeline in which the transformer depth is the main scale
parameter, while model width, number of heads, learning-rate scaling, training horizon,
batch size, and weight decay are automatically determined from this depth. We train
decoder-only transformer models with depths $24$ and $30$ (approximately $1.38$B and
$2.40$B parameters respectively, Table~\ref{tab:nanochat-config}) using the default
\texttt{nanochat} architecture and Muon-style \citep{jordan2024muon} optimization setup with NorMuon \citep{li2025normuonmakingmuonefficient}. Both depths use the same
auto-selected global batch size of $\approx1$M tokens, and step count is set by the default target parameter-data ratio. For each depth, we
compare the baseline run with an otherwise identical run where depth-wise update smoothing is applied to the repeated transformer blocks. We keep all other training settings fixed
and evaluate using validation loss and the DCLM CORE metric from the
\texttt{nanochat} evaluation suite.

\paragraph{Vision Transformer Image Classification:} ViT-B/4 models following DeiT training \citep{touvron2021trainingdataefficientimagetransformers} with default image augmentations. We train for 1700 epochs using AdamW with a cosine learning rate schedule, initial learning rate of $5 \times 10^{-4}$, and weight decay of 0.05. CIFAR-100 evaluated via top-1 accuracy.

\begin{table}[t!]
\centering
\caption{\texttt{nanochat} pretraining configurations. Width ($d_{\text{model}}$),
number of heads, and the global batch size are auto-derived from the transformer
depth by the scaling configuration.}
\label{tab:nanochat-config}
\begin{tabular}{lrrrrr}
\toprule
Variant & Layers & $d_{\text{model}}$ & Params & Batch (tokens) \\
\midrule
$d{=}24$ & 24 & 1536 & 1.38\,B & 1{,}048{,}576 \\
$d{=}30$ & 30 & 1920 & 2.40\,B & 1{,}048{,}576 \\
\bottomrule
\end{tabular}
\end{table}

\paragraph{Diffusion:} We evaluate gradient smoothing on U-ViT~\citep{bao2023worthwordsvitbackbone}, a ViT-based diffusion backbone with long skip connections proposed for image generation on CIFAR-10. The U-ViT architecture has $12$ transformer blocks with embedding dimension $512$ and $8$ attention heads. Optimization uses AdamW with learning rate $2\times 10^{-4}$, weight decay $0.03$, batch size $128$, with a $2500$-step warmup for $500{,}000$ steps. We sample via Euler--Maruyama SDE at $50$ NFE and report FID against both $10$k and $50$k generated samples.

\subsection{Experimental Results}
\label{subsec:exp_results}
Gradient Smoothing consistently improves performance across the diverse set of training regimes and benchmarks we evaluate. Notably, even simple local Window Smoothing is effective across domains, suggesting that the benefits do not depend on carefully engineered or highly specialized smoothing operators, though more sophisticated operators may yield further gains. Importantly, in all experiments, smoothing is applied directly on top of already tuned baseline training setups without modifying the underlying training hyperparameters, yet consistently improves performance without any additional hyperparameter tuning specific to Gradient Smoothing.

\begin{table*}[t]
\centering
\caption{LLM mathematical reasoning performance (\texttt{pass@1} accuracy) after RLVR fine-tuning. Gradient Smoothing improves accuracy on downstream mathematical benchmarks compared to fine-tuning just with AdamW. Results are averaged across 4 runs.}
\label{tab:llm_reasoning}

\begin{tabular}{lccccc}
\toprule
\textbf{Method} & \textbf{AIME24} & \textbf{AIME25} & \textbf{AMC23} & \textbf{MATH-500} & \textbf{Avg} \\
\midrule
Base model & 26.67 & 16.67 & 70.00 & 83.00 & 49.09\\
\midrule
Baseline (AdamW) & 35.00 & 24.17 & 78.12 & 84.50 & 55.45 {\scriptsize$\pm$ 0.99} \\
+ Window ($\alpha=0.1$, Standard, Full) & 35.83 & \underline{28.33} & \underline{78.75} & 84.30 & 56.80 {\scriptsize$\pm$ 0.21} \\
+ Window ($\alpha=0.1$, Standard, Proj) & \textbf{40.83} & 25.00 & \textbf{80.00} & \underline{84.55} & \textbf{57.60} {\scriptsize$\pm$ 1.12} \\
+ Window ($\alpha=0.2$, Standard, Proj) & \underline{36.67} & \textbf{30.00} & 77.50 & \textbf{85.35} & \underline{57.38} {\scriptsize$\pm$ 0.96} \\
\bottomrule
\end{tabular}
\end{table*}

\paragraph{RLVR Fine-Tuning for LLM Reasoning:}
Gradient Smoothing improves the reasoning accuracy of \texttt{DeepSeek-R1-Distill-Qwen-1.5B} after RLVR fine-tuning across mathematical benchmarks (Table~\ref{tab:llm_reasoning}). All Window Smoothing configurations outperform the AdamW baseline in average \texttt{pass@1} accuracy, with the best configuration achieving $57.60\%$ versus $55.45\%$. Across individual benchmarks, Gradient Smoothing yields substantial improvements, including gains on AIME24 (40.83\% vs.\ 35.00\%), AIME25 (30.00\% vs.\ 24.17\%), AMC23 (80.00\% vs.\ 78.12\%), and MATH-500 (85.35\% vs.\ 84.50\%) under different smoothing configurations. Notably, these improvements are achieved without modifying the RL objective or reward structure, suggesting that regularizing the optimization dynamics through depth-wise update smoothing can improve RL-based reasoning fine-tuning.

\paragraph{LLM Pretraining:}
Gradient Smoothing also improves the optimization trajectory in language-model pretraining with the \texttt{nanochat} recipe (Figure~\ref{fig:nanochat_combined}). During training, across both depth-$24$ and depth-$30$ models, smoothing consistently accelerates convergence of the validation loss/BPB while simultaneously improving the CORE metric. The improvements emerge at early steps and persist across the training trajectory, suggesting that smoothing provides a useful inductive bias for the model. Notably, these gains are obtained in an optimized pretraining setup using the NorMuon/AdamW optimization pipeline from \texttt{nanochat}, indicating that the benefits of smoothing extend beyond standard Adam-family training alone. The stronger improvements observed for the deeper depth-$30$ model further suggest that coupling updates across depth may become increasingly beneficial as the model depth increases.

\paragraph{Vision Transformer Image Classification:}

\begin{table}[t]
\centering
\caption{Image classification accuracy (\%) for ViT-B on CIFAR-100.}
\label{tab:vitb_cifar100}
\begin{tabular}{lc}
\toprule
\textbf{Method} & \textbf{CIFAR-100} \\
\midrule
Baseline (AdamW) & 74.56 \\
+ Window ($\alpha=0.1$, Norm, Proj) & \underline{75.44} \\
+ Window ($\alpha=0.2$, Dir, Proj) & \textbf{75.62} \\
\bottomrule
\end{tabular}
\end{table}

Gradient Smoothing improves test accuracy of ViT-B on CIFAR-100 (Table \ref{tab:vitb_cifar100}, Figure \ref{fig:test_acc_cifar100}). Window smoothing with $\alpha = 0.2$ achieves notable improvements (75.62\% vs 74.56\% baseline), while $\alpha = 0.1$ also improves over baseline (75.44\%). These results suggest that the smoothing regime provides consistent regularization for vision transformer training, with gains emerging early and persisting throughout optimization.

\paragraph{Diffusion:}
We evaluate Gradient Smoothing on U-ViT~\citep{bao2023worthwordsvitbackbone}, a ViT-based diffusion backbone with long U-Net-style skip connections which provides a strong tuned baseline on CIFAR-10. With 50 NFE sampling, Gradient Smoothing reduces FID@10k from $6.58$ to $5.82$ and FID@50k from $4.01$ to $3.74$ (Table~\ref{tab:fid_cifar10}). All runs share identical architecture, hyperparameters, and sampling budget as the tuned baseline. These results demonstrate that the benefits of Gradient Smoothing extend to generative diffusion modeling, and beyond architectures with standard residual connections to transformer backbones with long U-Net-style skip connections.

\begin{table}[t]
\centering
\caption{FID scores on CIFAR-10 for U-ViT diffusion training with and without Gradient Smoothing.}
\label{tab:fid_cifar10}
\begin{tabular}{lcc}
\toprule
\textbf{Method} & \textbf{FID@10k} & \textbf{FID@50k} \\
\midrule
Baseline (AdamW) & 6.58 & 4.01 \\
+ Window ($\alpha=0.2$,  & \textbf{5.82} & \textbf{3.74} \\
Norm, Proj) \\
\bottomrule
\end{tabular}
\end{table}

\begin{figure}[h!]
    \centering
    \includegraphics[width=0.9\linewidth]{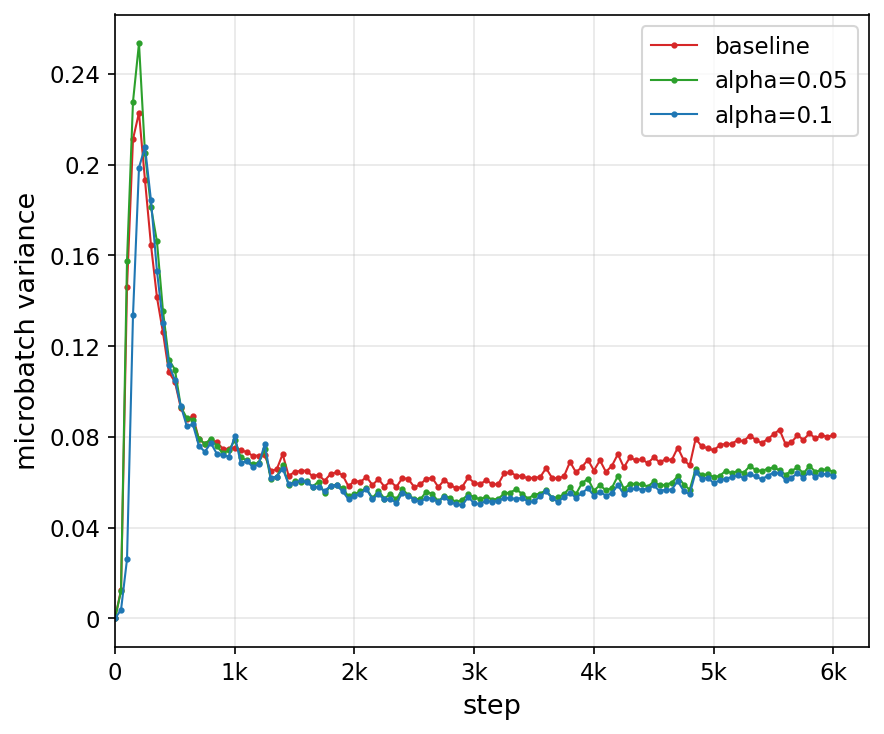}
    \caption{{\bf Microbatch gradient variance with window smoothing.} Total microbatch variance $V_{\mathrm{mb}}$ (Section \ref{sec:stability}) during nanochat depth 24 pretraining, comparing the baseline (red) against window smoothing with $\alpha = 0.05$ (green) and $\alpha = 0.1$ (blue). After the initial warmup, both smoothing runs maintain consistently lower microbatch gradient variance than baseline, with the gap widening later in training.}
    \label{fig:microbatch_var_curves}
\end{figure}
\subsection{Effect of Gradient Smoothing on Representations}

Smoothing parameter updates suggests that at convergence, models may tend toward more coordinated transformations across depth. To quantify this effect on learned representations, we measure two geometric properties of token trajectories: (i) cosine similarity between successive residual updates, and (ii) line shape score (LSS), which captures how close the trajectory lies to a one-dimensional affine space.

{\bf Cosine similarity of residual.} To quantify the change in representation induced by a layer, we consider the differences in token representation across depth $x_\ell$, for $1 \leq \ell \leq L$. For $\ell\in \{1, \ldots, L-1 \}$ the difference $d_\ell = x_{\ell+1} - x_\ell$ corresponds to the contribution of layer $\ell+1$. The similarity of contributions between layers is quantified by $c_\ell = \frac{ \langle{d_\ell, d_{\ell+1} \rangle}} { \|d_\ell\| \|d_{\ell+1}\| }$ (Figure \ref{fig:cos_vs_depth}). 

{\bf Line shape score.} We measure the linearity of token trajectories via the line shape score (LSS) \cite{gai2021mathematicalprincipledeeplearning}, measuring the closeness of a trajectory to a line. Given representations $x_0, \ldots, x_L$, the normalized trajectory $\tilde{x}_0 = x_0$ with 
$$\tilde{x}_\ell = \tilde{x}_{\ell-1} + \frac{x_\ell - x_{\ell-1}}{\|x_\ell - x_{\ell-1}\|_2}$$
The line shape score is then given by $\text{LSS} = \frac{L}{\|\tilde{x}_L - \tilde{x}_0\|_2}$. LSS $\geq 1$, with equality if and only if the trajectory is perfectly co-linear (Figure \ref{fig:lss_vs_alpha}).

We measure the effect of smoothing on representation structure via the Line Shape Score (LSS) and cosine similarity of CLS token trajectories for ViT-B trained with window smoothing for $\alpha \in \{0.1, \ldots, 0.4\}$ relative to the non-smoothed baseline ($\alpha = 0.0$). As the smoothing strength increases, layer-wise contributions to the CLS token become more aligned (Figure \ref{fig:cos_vs_alpha}), and the trend persists when measuring mean cosine similarity of the contribution at individual layers (Figure \ref{fig:cos_vs_depth}). Smoothing also increases the linearity of the CLS token trajectory across layers (Figure \ref{fig:lss_vs_alpha}). Together, these results suggest that Gradient Smoothing may implicitly regularize representation paths, encouraging more coherent evolution of representations through the model depth.

\subsection{Effect of Smoothing on Training Stability}
\label{sec:stability}

\paragraph{Gradient variance.}
To assess downstream improvements on the optimization trajectory, we measure the total stochastic gradient variance within the microbatch. We quantify
\[
V_{\mathrm{mb}}^{(t)} \;=\; \frac{1}{N}\sum_{n=1}^{N}\big\|g^{(t)}_n - \bar g^{(t)}\big\|_2^2,
\]
where at step $t$, $g_n^{(t)}$ is the per-sample parameter gradient and $\bar g^{(t)}$ is the sample mean over the $N$ batch elements at that step. Window smoothing with $\alpha \in \{ 0.05, 0.1 \}$ maintains consistently lower variance than the baseline throughout nanochat pretraining (Figure~\ref{fig:microbatch_var_curves}), suggestive of improved training stability. Full measurement details are deferred to Appendix~\ref{app:variance_empirical}, and a variance contraction bound is shown across layer depth (Proposition \ref{prop:variance_contraction}).



\section{Alignment of Representation Residuals under Gradient Smoothing}
\label{sec:rep_alignment_local_avg}

This section formalizes a consistent empirical observation: \emph{Gradient Smoothing increases the alignment of
representation differences across depth}. We study this through the residual increments
\(
r_\ell := h_{\ell+1}-h_\ell,
\)
and the cosine similarity $\cos(r_{\ell+1},r_\ell)$, which measures how much consecutive residual increments point in the same direction. Our analysis gives lower bounds on the \emph{average cosine} after applying either the baseline optimizer update or its smoothed version. The key mechanism is that Window Smoothing contracts depth-to-depth variation in the update directions.

\paragraph{Residual increments and alignment metric.}
For any depth-stacked parameters $\vartheta=(\vartheta_1,\dots,\vartheta_L)$, define for $\ell=1,\dots,L$, 
\begin{equation}
\label{eq:r_def_main}
r_\ell(\vartheta):=h_{\ell+1}(\vartheta)-h_\ell(\vartheta)
=F\!\big(h_\ell(\vartheta);\vartheta_\ell\big)\in\mathbb{R}^d.
\end{equation}
We measure directional alignment of consecutive residual increments by
\begin{equation}
\label{eq:avg_cos_def_main}
\overline{\cos}(\vartheta)
:=\frac{1}{L-1}\sum_{\ell=1}^{L-1}\cos\!\big(r_{\ell+1}(\vartheta),r_\ell(\vartheta)\big).
\end{equation}
This metric is invariant to the magnitudes $\|r_\ell\|$ and isolates the alignment of representation
\emph{directions}. Empirically, we consistently observe that $\overline{\cos}$ increases under smoothing, as shown in
Figure~\ref{fig:cos_vs_alpha}.

\paragraph{Baseline and smoothed one-step updates.}
Let $u(\theta)=(u_1(\theta),\dots,u_L(\theta))$ denote the blockwise update direction produced by a base optimizer.
From current parameters $\theta$, define the baseline and smoothed one-step updates
\begin{equation}
\label{eq:one_step_updates_main}
\theta^b=\theta-\eta\,u(\theta),
\qquad
\theta^s=\theta-\eta\,S u(\theta),
\end{equation}
where $S$ is a depth-smoothing operator. We focus on the local-averaging smoother (Window Smoothing) with $\alpha\in(0,\tfrac12]$:
\begin{equation}
\label{eq:S_local_avg_main}
(Su)_\ell
=
\begin{cases}
\big(1-\tfrac{\alpha}{2}\big)u_1+\tfrac{\alpha}{2}u_2, & \ell=1,\\[2pt]
(1-\alpha)u_\ell+\tfrac{\alpha}{2}(u_{\ell-1}+u_{\ell+1}), & 2\le \ell\le L-1,\\[2pt]
\big(1-\tfrac{\alpha}{2}\big)u_L+\tfrac{\alpha}{2}u_{L-1}, & \ell=L.
\end{cases}
\end{equation}

\paragraph{Assumptions.}
We assume standard Lipschitz regularity of the block map $F$ and a mild non-degeneracy condition requiring the residual increments to have norms bounded away from zero along the relevant layer-wise trajectories. The formal assumptions are
given in Appendix~\ref{app:rep_alignment_local_avg}.

\subsection{Main Result}
\label{subsec:rep_alignment_main_results}

We now state the main alignment comparison between the baseline and smoothing. Unlike a simplified depth-flat analysis where the parameters are assumed to be equal across depth, this result allows the current
parameters to already have nontrivial depth-to-depth variation. The comparison therefore separates two effects:
the pre-existing depth variation $D\theta$ and the update-induced variation $Du(\theta)$, where $D$ is the first-difference operator $(Dv)_\ell=v_{\ell+1}-v_\ell$.

\begin{theorem}[Alignment Comparison under Smoothing]
\label{thm:avg_cos_general_main}
Assume the regularity conditions of Appendix~\ref{app:rep_alignment_local_avg}
(Assumption~\ref{ass:lipschitz_nondeg}). Let $S$ be the local-averaging smoother \eqref{eq:S_local_avg_main}
with $\alpha\in(0,\tfrac12]$, and let $D$ be the first-difference operator $(Dv)_\ell=v_{\ell+1}-v_\ell$.
Define
\[
\delta := D\theta,
\qquad
w := Du(\theta),
\]
where norms and inner products are taken over the corresponding depth-stacked block vectors. Let
$S^{(1)}$ denote the induced averaging operator on first differences, so that
\[
D(Su)=S^{(1)}Du
\]
as shown in Appendix~\ref{app:rep_alignment_local_avg}, Lemma~\ref{lem:DS_equals_S1D}.

Then the smoothed and baseline one-step updates satisfy
\begin{align}
\label{eq:avg_cos_general_s_main}
\overline{\cos}(\theta^s)
&\ge
1-\frac{4L_h^2M^2}{m^2}
-\frac{4L_\theta^2}{m^2}\cdot\frac{1}{L-1}\,
\big\|\delta-\eta S^{(1)}w\big\|^2,\\
\label{eq:avg_cos_general_b_main}
\overline{\cos}(\theta^b)
&\ge
1-\frac{4L_h^2M^2}{m^2}
-\frac{4L_\theta^2}{m^2}\cdot\frac{1}{L-1}\,
\big\|\delta-\eta w\big\|^2.
\end{align}
Moreover, the difference between these two lower bounds is exactly
\begin{equation}
\label{eq:cert_gap_exact_main}
\frac{4L_\theta^2}{m^2}\cdot\frac{1}{L-1}
\left[
\eta^2\big(\|w\|^2-\|S^{(1)}w\|^2\big)
-2\eta\big\langle \delta,(I-S^{(1)})w\big\rangle
\right].
\end{equation}
Consequently, if $\eta>0$ and $L_\theta>0$, smoothing gives a strictly improved lower bound whenever
\begin{equation}
\label{eq:improvement_condition_main}
\big\langle \delta,(I-S^{(1)})w\big\rangle
<
\frac{\eta}{2}\Big(\|w\|^2-\|S^{(1)}w\|^2\Big).
\end{equation}
In particular, since $S^{(1)}$ is a contraction with
\begin{equation}
\label{eq:mu_star_main}
\|S^{(1)}\|_{\mathrm{op}}
=
\mu_\star
:=
1-\alpha\Big(1-\cos\Big(\frac{\pi}{L}\Big)\Big)
<1,
\end{equation}
the update-induced depth variation satisfies
\begin{equation}
\label{eq:update_roughness_contraction_main}
\|S^{(1)}w\|^2
\le
\mu_\star^2\|w\|^2.
\end{equation}
\end{theorem}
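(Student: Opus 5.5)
The plan is to prove a generic per-layer cosine lower bound for arbitrary depth-stacked parameters $\vartheta$, and then specialize it to $\vartheta=\theta^s$ and $\vartheta=\theta^b$. Throughout I read Assumption~\ref{ass:lipschitz_nondeg} as providing: Lipschitz constants $L_h$ and $L_\theta$ of $F$ in its first and second arguments, a bound $\|h_{\ell+1}-h_\ell\|\le M$ (equivalently $\|r_\ell\|\le M$) on the relevant trajectories, and the non-degeneracy bound $\|r_\ell\|\ge m>0$.

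\textbf{Step 1 (per-layer cosine bound).} For unit vectors, $1-\cos(a,b)=\tfrac12\|\hat a-\hat b\|^2$. Combined with $\|\hat a-\hat b\|\le 2\|a-b\|/\max(\|a\|,\|b\|)\le 2\|a-b\|/m$, this gives
\[
\cos(r_{\ell+1},r_\ell)\ge 1-\frac{2\|r_{\ell+1}-r_\ell\|^2}{m^2}.
\]
Next I write
\[
r_{\ell+1}-r_\ell=\big[F(h_{\ell+1};\vartheta_{\ell+1})-F(h_\ell;\vartheta_{\ell+1})\big]+\big[F(h_\ell;\vartheta_{\ell+1})-F(h_\ell;\vartheta_\ell)\big].
\]
This yields $\|r_{\ell+1}-r_\ell\|\le L_h\|r_\ell\|+L_\theta\|\vartheta_{\ell+1}-\vartheta_\ell\|\le L_hM+L_\theta\|(D\vartheta)_\ell\|$. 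Using $(x+y)^2\le 2x^2+2y^2$ gives
\[
\cos\ge 1-\frac{4L_h^2M^2}{m^2}-\frac{4L_\theta^2}{m^2}\|(D\vartheta)_\ell\|^2 .
\]
Averaging over $\ell=1,\dots,L-1$ then gives
\[
\overline{\cos}(\vartheta)\ge 1-\frac{4L_h^2M^2}{m^2}-\frac{4L_\theta^2}{m^2}\cdot\frac{1}{L-1}\|D\vartheta\|^2 .
\]
The constants may need adjusting to match the appendix's exact assumption, but this is the intended form.

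\textbf{Step 2 (specialization).} By linearity of $D$, $D\theta^b=\delta-\eta w$. For the smoothed step, Lemma~\ref{lem:DS_equals_S1D} gives $D\theta^s=\delta-\eta D(Su)=\delta-\eta S^{(1)}w$. Substituting these into the Step 1 bound yields \eqref{eq:avg_cos_general_s_main} and \eqref{eq:avg_cos_general_b_main}.

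\textbf{Step 3 (gap and improvement condition).} Expanding the squares gives
\[
\|\delta-\eta w\|^2-\|\delta-\eta S^{(1)}w\|^2=\eta^2\big(\|w\|^2-\|S^{(1)}w\|^2\big)-2\eta\langle\delta,(I-S^{(1)})w\rangle,
\]
which is \eqref{eq:cert_gap_exact_main}. This quantity is positive exactly under \eqref{eq:improvement_condition_main}, after dividing by $2\eta>0$.

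\textbf{Step 4 (operator norm of $S^{(1)}$; the main obstacle).} Computing $D(Su)$ explicitly, interior differences satisfy
\[
(D Su)_\ell=(1-\alpha)w_\ell+\tfrac{\alpha}{2}(w_{\ell-1}+w_{\ell+1}).
\]
At the two ends the boundary rows of $S$, with weight $1-\tfrac{\alpha}{2}$, give $(1-\tfrac{3\alpha}{2})w_1+\tfrac{\alpha}{2}w_2$ and the symmetric expression at $\ell=L-1$. So $S^{(1)}=(I-\tfrac{\alpha}{2}T)\otimes I$, where $T$ is the $(L-1)\times(L-1)$ path Laplacian with Dirichlet-type boundary, i.e. tridiagonal with diagonal $(3,2,\dots,2,3)$ and $-1$ off the diagonal.

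I would verify that $T$ is diagonalized by a sine/cosine basis. Equivalently, $S$ is the Neumann-type averaging operator, whose eigenvalues are $1-\alpha(1-\cos(k\pi/L))$ for $k=0,\dots,L-1$, and $D$ intertwines $S$ and $S^{(1)}$, mapping the nonconstant eigenvectors of $S$ onto an eigenbasis of $S^{(1)}$. Hence the spectrum of $S^{(1)}$ is $\{1-\alpha(1-\cos(k\pi/L)):k=1,\dots,L-1\}$. Because $\alpha\le\tfrac12$, every eigenvalue lies in $[1-2\alpha,1)\subset[0,1)$. Since $S^{(1)}$ is symmetric, its operator norm is the largest eigenvalue, attained at $k=1$, which is $\mu_\star$. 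This proves \eqref{eq:mu_star_main} and \eqref{eq:update_roughness_contraction_main}.

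The delicate points are the boundary bookkeeping and the intertwining argument. As a fallback, I would check directly that the vectors $\sin(k\pi\ell/L)$, $\ell=1,\dots,L-1$, are eigenvectors of $T$ with eigenvalue $2(1-\cos(k\pi/L))$, including the boundary rows.
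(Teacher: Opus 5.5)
Your Steps 1--3 are correct and match the paper's argument (Lemmas~\ref{lem:cos_unit_dist}--\ref{lem:turning_bound} followed by the expansion in Corollary~\ref{cor:avg_cos_general}). Using $\max(\|a\|,\|b\|)$ instead of $\min$ in the normalization bound is valid and slightly sharper.

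The problem is Step 4: your boundary row of $D(Su)$ is wrong. For $L\ge 3$,
\[
(Su)_2-(Su)_1=-(1-\alpha)u_1+\big(1-\tfrac{3\alpha}{2}\big)u_2+\tfrac{\alpha}{2}u_3=(1-\alpha)w_1+\tfrac{\alpha}{2}w_2 .
\]
The coefficient $1-\tfrac{3\alpha}{2}$ sits on $u_2$, not on $w_1$; the same holds symmetrically at $\ell=L-1$, and for $L=2$ one gets $(1-\alpha)w_1$. So the true $S^{(1)}$ has constant diagonal $1-\alpha$, exactly as in Lemma~\ref{lem:DS_equals_S1D}. In fact $S=I-\tfrac{\alpha}{2}D^\top D$, hence $DS=(I-\tfrac{\alpha}{2}DD^\top)D$ and $S^{(1)}=I-\tfrac{\alpha}{2}DD^\top$. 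Here $DD^\top$ is the Dirichlet path Laplacian with diagonal $(2,\dots,2)$, not $(3,2,\dots,2,3)$.

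With your $T$ the sine vectors are \emph{not} eigenvectors. For $t=k\pi/L$, the first row leaves a residual $(Tv)_1-2(1-\cos t)v_1=\sin t\neq 0$. So your fallback check would fail, and the claimed equivalence between your explicit matrix and the intertwining argument is false. Concretely, for $L=3$ your matrix (diagonal $1-\tfrac{3\alpha}{2}$, off-diagonal $\tfrac{\alpha}{2}$) has eigenvalues $1-\alpha$ and $1-2\alpha$. Its norm is therefore $1-\alpha$, whereas $\mu_\star=1-\tfrac{\alpha}{2}$.

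The intertwining argument by itself is sound, and it is a genuinely different derivation from the paper's. The paper directly verifies the sine eigenvectors of the constant-diagonal Toeplitz matrix in Lemma~\ref{lem:Sdiff_eigs}. In your route, $D$ is onto with kernel $\mathrm{span}(\mathbf 1)$, so $DS=S^{(1)}D$ determines $S^{(1)}$ uniquely. $D$ maps the Neumann eigenvectors $v^{(k)}_\ell=\cos\big(k\pi(\ell-\tfrac12)/L\big)$, $k=1,\dots,L-1$, to $-2\sin\big(\tfrac{k\pi}{2L}\big)\sin(k\pi\ell/L)$. These form an eigenbasis of $S^{(1)}$ with eigenvalues $1-\alpha(1-\cos(k\pi/L))$. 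This explains nicely why $S$ and $S^{(1)}$ share their nonconstant spectrum.

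To make it a proof, you need three repairs:
\begin{enumerate}
\item Discard the incorrect explicit matrix.
\item Actually verify the spectrum of $S$; the paper only asserts it, in Proposition~\ref{prop:variance_contraction}.
\item Justify symmetry of $S^{(1)}$ independently before identifying its operator norm with the top eigenvalue. You can do this via $S^{(1)}=I-\tfrac{\alpha}{2}DD^\top$, or by noting $\langle Dv^{(j)},Dv^{(k)}\rangle=\langle v^{(j)},D^\top Dv^{(k)}\rangle=0$ for $j\neq k$.
\end{enumerate}
With these repairs, your Step 4 yields \eqref{eq:mu_star_main} and \eqref{eq:update_roughness_contraction_main}.
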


The proof is given in Appendix~\ref{app:rep_alignment_local_avg}. 

The theorem gives a comparison between the baseline and smoothed one-step updates. Both bounds have the same form, but the baseline depends on the raw update difference $w=Du(\theta)$, while the smoothed bound depends on the averaged update difference $S^{(1)}w$. Thus, smoothing affects the bound only through the depth-wise variation of the update.

The exact gap separates two effects. The term
\[
\|w\|^2-\|S^{(1)}w\|^2
\]
measures the reduction in depth-to-depth update variation caused by smoothing, while
\[
\langle \delta,(I-S^{(1)})w\rangle
\]
measures how the removed component of the update variation aligns with the existing parameter variation $\delta=D\theta$. Hence smoothing improves the lower bound whenever the removed component is not too positively aligned with the current depth-wise parameter variation. In the depth-flat case $\delta=0$, this obstruction vanishes, and the improvement follows directly from the contraction of update differences by $S^{(1)}$.

\begin{figure}[h!]
    \centering
    \includegraphics[width=1.0\linewidth]{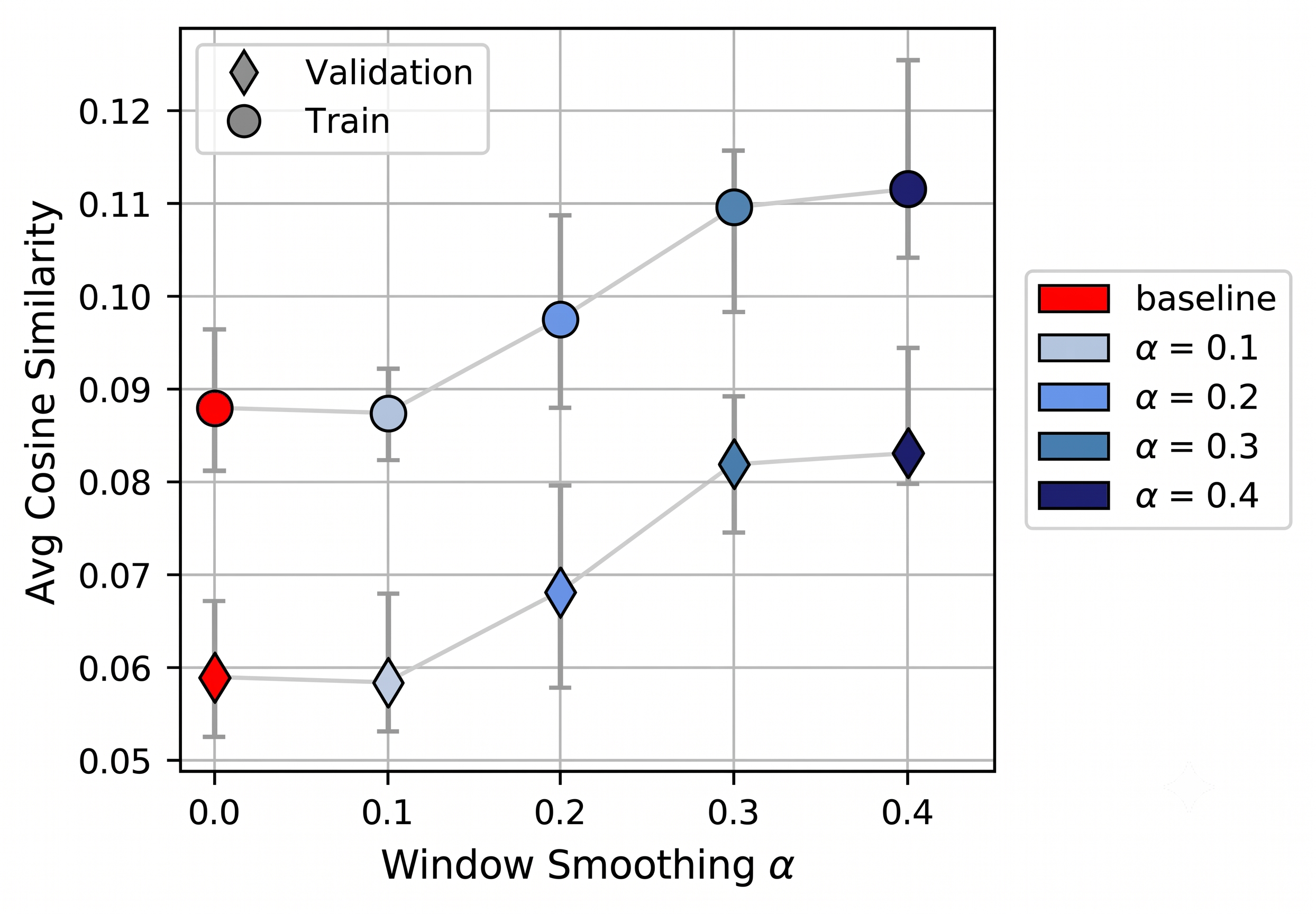}
    \caption{{\bf Layer contributions similarity with increased smoothing.} Average cosine similarity of layer differences $d_\ell = x_{\ell+1} - x_\ell$ for ViT-B CLS token trained on CIFAR-100 (1700 epochs). Means are taken across all layer pairs $(d_i, d_j)$ for $1 \leq i , j \leq L-1$, $i \neq j$. Each point shows the median across 100 images from the validation set (rhombus) and training set (circle); error bars indicate interquartile range (Q1–Q3). As the window smoothing parameter $\alpha \in \{0.1, \ldots, 0.4\}$ increases from the baseline ($\alpha = 0$), mean cosine similarity increases monotonically, suggesting that gradient smoothing encourages more consistent directional updates across layers.}
    \label{fig:cos_vs_alpha}
\end{figure}

\section{Discussion}

{\bf Implicit Biases.} A primary motivation arises from the observed similarity of deep layer representations in transformers and LLMs. If representations and layer contributions exhibit alignment properties at convergence, Gradient Smoothing may be a helpful tool for understanding the implicit structure induced by preconditioning and a mechanism for injecting helpful inductive biases earlier in training. There is an optimal balance when introducing similar regularizers or inductive biases: while we find that moderate layer-wise gradient smoothing is often beneficial, excessive smoothing may reduce important layer-specific information in the updates.

{\bf General Operators.} Gradient Smoothing represents only one family of the broader paradigm of Depth-wise Gradient Augmentation. While we focus on local window averaging, many other coupling operators are possible, including exponential smoothing as well as adaptive or learned schemes based on gradient or representation structure. The strength of coupling may also vary across depth or throughout training, and could be used to steer other characteristics such as gradient variance (Figure \ref{fig:microbatch_var_curves}). More generally, while this work studies smoothing-based coupling, alternative forms of update interaction, including selective coupling, may offer additional benefits. Exploring this broader design space is a promising potential direction for future work.



\section{Related Work}

\paragraph{Transformer Layer Regularity.}
Recent empirical work shows that trained transformers exhibit strong layer-wise structure in their representations and Jacobians \citep{aubry2025transformerblockcouplingcorrelation, gromov2025unreasonableineffectivenessdeeperlayers, li2024residualalignmentuncoveringmechanisms, patrawala2026llm, kapl2025depthgrownmodelsovercomecurse, lad2025remarkablerobustnessllmsstages, jiang2025tracingrepresentationprogressionanalyzing, wolfram2025layerssimilardepthsgenerate}, with some links of such structure to improved generalization. A closely related line of work on \emph{compression valleys} \citep{skean2024doesrepresentationmatterexploring, skean2025layerlayeruncoveringhidden} shows that mid-layer parameters exhibit lower-rank structure than boundary layers, with this low-rank structure often shared across consecutive mid-layers. Such observations also motivate the exploration of layer pruning methods that highlight minimal performance loss \citep{men2024shortgptlayerslargelanguage, gromov2025unreasonableineffectivenessdeeperlayers, jiang2025on, krause2025treadtokenroutingefficient}. Our work connects to this literature by showing that gradient smoothing amplifies these empirical alignment tendencies, while leading to improved performance.

\paragraph{Adaptive Optimization.} Modern deep learning relies heavily on adaptive optimizers that scale updates based on gradient statistics. Adam \citep{kingma2017adam} and AdamW \citep{loshchilov2019decoupled} maintain adaptive per-parameter estimates of first and second moments, enabling effective training across all modern architectures. Second-order methods such as Shampoo \citep{gupta2018shampoopreconditionedstochastictensor} and SOAP \citep{vyas2025soap} approximate full-matrix preconditioning for improved convergence, while Muon \citep{jordan2024muon} applies momentum in the orthogonalized gradient space. These methods focus on adaptively scaling updates per parameter group, whereas gradient smoothing operates on a complementary axis: \emph{depth-wise augmentations} of updates across optimization steps. Our approach is agnostic to the choice of base optimizer and can be combined with existing adaptive methods to stabilize training by amplifying gradient information across depth.

\paragraph{Pre-conditioned Optimization.} Natural gradient descent \citep{amari1998} and its approximations such as K-FAC \citep{martens2020optimizingneuralnetworkskroneckerfactored, george2021fastapproximatenaturalgradient, eschenhagen2024kroneckerfactoredapproximatecurvaturemodern, lin2024structuredinversefreenaturalgradient, nagwekar2025guideddescentoptimizationalgorithms} precondition updates using curvature information to accelerate convergence. Our method adopts a pre-conditioned perspective but focuses on depth structure rather than per-layer curvature, applying smoothing across transformer depth with minimal computational overhead.

\textbf{Structure in Hidden Representations.} Many works study the emergent structure of deep network representations \citep{wang2024understanding, parker2023neural, zangrando2025provableemergencedeepneural, garrod2024unifying, wang2024progressive, hoyt2021probing, arous2024highdimensional, zarka2021separationconcentrationdeepnetworks, shaul22a-pmlr, papyan2020traces, sukenik, papyan2017nc, Zhou2025, wang2026visualpromptagnosticevolution, fisher2024pushingboundariesmixupsinfluence}, especially in transformers. In LLMs, recent works identify uniform token structures \citep{wu2024linguisticcollapse, shai2025transformersrepresentbeliefstate, piotrowski2025constrainedbeliefupdatesexplain, skean2025layerlayeruncoveringhidden, skean2024doesrepresentationmatterexploring} and low-dimensional hidden trajectories \citep{song2025bridgingdimensionalchasmuncover, sarfati2024linesthoughtlargelanguage}. Our work proposes an approach to regularize representation trajectories across depth by augmenting the gradient updates.

\section{Conclusion}
We introduced \emph{Depth-wise Gradient Augmentation}, a general optimization paradigm in which the collection of block-wise optimizer updates is transformed along the depth dimension. Within this framework, we studied \emph{Gradient Smoothing}, a family of depth-wise smoothing methods, instantiated through local Window Smoothing. The resulting method is compatible with arbitrary base optimizers and consistently improves training across a diverse range of settings, including LLM pretraining, RL post-training for reasoning, image classification, and diffusion modeling. Beyond these performance gains, we showed that Gradient Smoothing promotes more structured representation evolution across depth, leading to greater alignment and linearity of representation trajectories. We further provided a theoretical characterization of this phenomenon by interpreting Gradient Smoothing as a structured depth-wise preconditioning method that contracts variation in block-wise updates, yielding improved bounds on the alignment of representation differences.

More broadly, our results suggest that explicitly exploiting cross-depth structure during optimization can improve both optimization and generalization in modern deep networks. We hope that the proposed Depth-wise Gradient Augmentation framework motivates future work on richer depth-wise update transformations, adaptive smoothing operators, learned augmentation schemes, and other forms of structured optimization that leverage the organization of repeated-block architectures.




\section*{Impact Statement}

This paper presents work whose goal is to advance the field of Machine Learning. There are some potential societal consequences of improved optimization methods, including downstream impacts on systems used in science and decision-making. At the same time, the techniques introduced in this work are general-purpose and do not target any specific application domain. We do not foresee immediate negative societal impacts arising directly from this work.



\newpage

\bibliography{paper}
\bibliographystyle{icml2026}

\newpage
\appendix
\onecolumn
%


\section{Appendix}

\subsection{Specialization of Smoothing to Adam, AdamW, and Muon}
\label{sec:adam_specialization}

In our experiments, the base optimizer $\mathcal{U}^{(t)}$ is typically Adam \citep{kingma2017adam}, AdamW \citep{loshchilov2019decoupled}, or Muon~\citep{jordan2024muon}.
For completeness, we briefly recall the update rule and describe how gradient smoothing composes with it.

\paragraph{Adam updates.}
Let
\(
g^{(t)}=\nabla_\theta \mathcal{L}(\theta^{(t)},\phi^{(t)})
\)
denote the gradient with respect to the repeated-block parameters. Note that the remaining parameters $\phi$ are updated in the same manner as defined below, but are just not involved in the smoothing process.
Adam maintains exponential moving averages of first and second moments,
\begin{align}
m^{(t)} &= \beta_1 m^{(t-1)} + (1-\beta_1) g^{(t)}, \\
v^{(t)} &= \beta_2 v^{(t-1)} + (1-\beta_2)\big(g^{(t)}\odot g^{(t)}\big),
\end{align}
with bias-corrected versions $\hat m^{(t)}$ and $\hat v^{(t)}$.
The update direction is
\begin{equation}
u^{(t)} := \frac{\hat m^{(t)}}{\sqrt{\hat v^{(t)}} + \varepsilon},
\end{equation}
and the block parameters are updated as
\begin{equation}
\theta^{(t+1)} = \theta^{(t)} - \eta\, u^{(t)}.
\end{equation}

Equivalently, Adam applies a diagonal, coordinate-wise preconditioner
\[
u^{(t)} = A^{(t)} g^{(t)}, 
\qquad
A^{(t)} := \mathrm{diag}\!\left(\frac{1}{\sqrt{\hat v^{(t)}}+\varepsilon}\right).
\]

\paragraph{AdamW.}
AdamW applies decoupled weight decay, yielding the block-parameter update
\begin{equation}
\theta^{(t+1)} = \theta^{(t)} - \eta\big(u^{(t)} + \lambda\, \theta^{(t)}\big),
\end{equation}
with non-block parameters $\phi$ updated by the base optimizer in the standard way.

\paragraph{Adam/AdamW with Gradient Smoothing.} The smoothed block update is then
\[
\tilde u^{(t)} = P\, u^{(t)}.
\]
Gradient smoothing yields
\begin{align}
\theta^{(t+1)} &= \theta^{(t)} - \eta\, \tilde u^{(t)} \quad \text{(Adam)}, \\
\theta^{(t+1)} &= \theta^{(t)} - \eta\big(\tilde u^{(t)} + \lambda\, \theta^{(t)}\big) \quad \text{(AdamW)}.
\end{align}

Equivalently, the effective update can be written as
\[
\theta^{(t+1)} = \theta^{(t)} - \eta\, (P A^{(t)}) g^{(t)},
\]
highlighting that gradient smoothing composes a depth-structured preconditioner with Adam’s coordinate-wise preconditioning, with $\phi$ being updated as in standard Adam/AdamW.

\paragraph{Muon.}

Muon \citep{jordan2024muon} treats each repeated-block parameter as a matrix and produces an approximately orthogonal update direction. It maintains a momentum buffer
\[
M^{(t)} = \mu\, M^{(t-1)} + g^{(t)},
\]
and forms the update direction by orthogonalizing $M^{(t)}$ via a Newton--Schulz iteration,
\[
u^{(t)} := \mathrm{NS}\!\big(M^{(t)}\big) \;\approx\; M^{(t)}\big((M^{(t)})^\top M^{(t)}\big)^{-1/2}.
\]
The block parameters are updated as $\theta^{(t+1)} = \theta^{(t)} - \eta\,u^{(t)}$. In contrast to Adam's diagonal preconditioning, $u^{(t)}$ has singular values approximately equal to one, so its per-step magnitude is largely decoupled from that of the raw gradient.

\paragraph{Muon with Gradient Smoothing.}
The smoothed block update composes with Muon in the same way as with Adam:
\[
\tilde u^{(t)} = P\, u^{(t)},
\qquad
\theta^{(t+1)} = \theta^{(t)} - \eta\, \tilde u^{(t)}.
\]
For matrix-valued block parameters, $P$ is applied to the orthogonalized update produced by the Newton--Schulz iteration; for non-matrix block parameters (e.g., normalization scales), to which Newton--Schulz does not apply, $P$ is applied directly to the base-optimizer update for that parameter.

\subsection{Alignment of Representation Residuals under Gradient Smoothing}
\label{app:rep_alignment_local_avg}

\paragraph{Residual increments and alignment metric.}
For any depth-stacked parameter vector $\vartheta=(\vartheta_1,\dots,\vartheta_L)$, define the residual increments
\begin{equation}
\label{eq:r_def}
r_\ell(\vartheta) \;:=\; h_{\ell+1}(\vartheta)-h_\ell(\vartheta)
\;=\; F\!\big(h_\ell(\vartheta);\vartheta_\ell\big)\in\mathbb{R}^d,
\qquad \ell=1,\dots,L,
\end{equation}
and the normalized directions
\begin{equation}
\label{eq:d_def}
d_\ell(\vartheta)\;:=\;\frac{r_\ell(\vartheta)}{\|r_\ell(\vartheta)\|}\in\mathbb{R}^d
\quad\text{(defined whenever $r_\ell(\vartheta)\neq 0$).}
\end{equation}
Our empirical metric is
\begin{equation}
\label{eq:cos_metric}
\cos\!\big(r_{\ell+1}(\vartheta),r_\ell(\vartheta)\big)
\;=\;\big\langle d_{\ell+1}(\vartheta),d_\ell(\vartheta)\big\rangle,
\qquad \ell=1,\dots,L-1.
\end{equation}
Working with $d_\ell$ isolates \emph{directional} alignment from magnitude effects (e.g.\ $\|r_{\ell+1}-r_\ell\|$ may be large even when $\cos(r_{\ell+1},r_\ell)$ is close to $1$ due to norm mismatch).

\paragraph{Assumptions.}
We work on a region containing the trajectories $\{h_\ell(\theta^b)\}$ and $\{h_\ell(\theta^s)\}$ and impose the following standard regularity assumptions.

\begin{assumption}
\label{ass:lipschitz_nondeg}
There exist constants $L_h,L_\theta\ge 0$ such that for all $h,\bar h\in\mathbb{R}^d$ and $\theta,\bar\theta\in\mathbb{R}^p$ in the region of interest,
\begin{equation}
\label{eq:F_Lip}
\|F(h;\theta)-F(\bar h;\bar\theta)\|
\;\le\;L_h\|h-\bar h\|+L_\theta\|\theta-\bar\theta\|.
\end{equation}

Moreover, there exist constants $0<m\le M<\infty$ such that, for each
$\vartheta\in\{\theta^b,\theta^s\}$,
\[
m \le \min_{\ell\in[L]}\|r_\ell(\vartheta)\|,
\qquad
\max_{\ell\in[L]}\|r_\ell(\vartheta)\|\le M .
\]
\end{assumption}

\begin{lemma}
\label{lem:cos_unit_dist}
For any nonzero $a,b\in\mathbb{R}^d$,
\begin{equation}
\label{eq:cos_identity}
1-\cos(a,b)\;=\;\frac{1}{2}\left\|\frac{a}{\|a\|}-\frac{b}{\|b\|}\right\|^2.
\end{equation}
\end{lemma}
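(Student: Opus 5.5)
The identity is a direct expansion of a squared norm, and we would use nothing beyond bilinearity of the inner product. Set $\hat a := a/\|a\|$ and $\hat b := b/\|b\|$. Both are well defined because $a,b\neq 0$, and both are unit vectors, so $\|\hat a\|^2=\|\hat b\|^2=1$.

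We would expand the right-hand side:
\[
\|\hat a-\hat b\|^2=\|\hat a\|^2-2\langle \hat a,\hat b\rangle+\|\hat b\|^2 = 2-2\langle \hat a,\hat b\rangle .
\]
By definition of the cosine, $\cos(a,b)=\langle a,b\rangle/(\|a\|\,\|b\|)=\langle \hat a,\hat b\rangle$. Hence $\tfrac12\|\hat a-\hat b\|^2 = 1-\cos(a,b)$, which is the claim.

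No step here is a real obstacle. The only thing to confirm is that the cosine is the one used in \eqref{eq:cos_metric}, namely the inner product of normalized vectors. For the later use of this lemma, the point to carry forward is that \eqref{eq:cos_identity} converts a cosine lower bound into an upper bound on $\|d_{\ell+1}-d_\ell\|^2$. We expect to control that quantity in the proof of Theorem~\ref{thm:avg_cos_general_main} through the estimate $\|d_{\ell+1}-d_\ell\|\le 2\|r_{\ell+1}-r_\ell\|/m$ together with the Lipschitz bound of Assumption~\ref{ass:lipschitz_nondeg}.
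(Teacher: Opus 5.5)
Your proof is correct and is the same argument the paper gives: expand $\|\hat a-\hat b\|^2=2-2\langle \hat a,\hat b\rangle$ using $\|\hat a\|=\|\hat b\|=1$, identify $\langle \hat a,\hat b\rangle$ with $\cos(a,b)$, and rearrange.
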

\begin{proof}
Expand $\|a/\|a\|-b/\|b\|\|^2=2-2\langle a/\|a\|,b/\|b\|\rangle$ and rearrange.
\end{proof}

\begin{lemma}
\label{lem:norm_lipschitz}
For any nonzero $a,b\in\mathbb{R}^d$,
\begin{equation}
\label{eq:norm_lipschitz}
\left\|\frac{a}{\|a\|}-\frac{b}{\|b\|}\right\|
\;\le\;\frac{2\|a-b\|}{\min(\|a\|,\|b\|)}.
\end{equation}
\end{lemma}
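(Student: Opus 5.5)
We prove Lemma~\ref{lem:norm_lipschitz} by inserting an intermediate vector that shares the direction of one argument and the magnitude of the other, and then applying the triangle inequality. Because the right-hand side is symmetric in $a$ and $b$, we may assume without loss of generality that $\|a\|\ge\|b\|$, so that $\min(\|a\|,\|b\|)=\|b\|$. The inequality to prove then reads
\[
\left\|\frac{a}{\|a\|}-\frac{b}{\|b\|}\right\|\le\frac{2\|a-b\|}{\|b\|}.
\]
Multiplying the left-hand side by $\|b\|$, it suffices to show $\bigl\|\tfrac{\|b\|}{\|a\|}a-b\bigr\|\le 2\|a-b\|$.

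Insert $a$ and split:
\[
\left\|\frac{\|b\|}{\|a\|}a-b\right\|\le\left\|\frac{\|b\|}{\|a\|}a-a\right\|+\|a-b\|.
\]
The first term equals $\bigl|\tfrac{\|b\|}{\|a\|}-1\bigr|\,\|a\|=\bigl|\|b\|-\|a\|\bigr|$. By the reverse triangle inequality, $\bigl|\|a\|-\|b\|\bigr|\le\|a-b\|$. Adding the two pieces gives $2\|a-b\|$.

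Dividing by $\|b\|$ yields the claim. No step is a genuine obstacle. The only care needed is to normalize by the smaller norm, via the WLOG reduction; this is what allows the rescaling factor $\|b\|/\|a\|\le1$ to be handled cleanly and produces the $\min$ in the denominator.
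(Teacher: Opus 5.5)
Your proof is correct and is essentially the paper's argument: the paper writes $\frac{a}{\|a\|}-\frac{b}{\|b\|}=\frac{a-b}{\|a\|}+b\bigl(\frac{1}{\|a\|}-\frac{1}{\|b\|}\bigr)$, which is your add-and-subtract step with the roles of $a$ and $b$ swapped, followed by the same triangle and reverse-triangle inequalities and a symmetrization. One small remark: your WLOG $\|a\|\ge\|b\|$ is never actually used, since $\bigl|\tfrac{\|b\|}{\|a\|}-1\bigr|\,\|a\|=\bigl|\|a\|-\|b\|\bigr|$ holds for any nonzero $a,b$, so running the same computation normalized by the larger norm gives the sharper denominator $\max(\|a\|,\|b\|)$.
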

\begin{proof}
Write
\[
\frac{a}{\|a\|}-\frac{b}{\|b\|}
=\frac{a-b}{\|a\|}+b\Big(\frac{1}{\|a\|}-\frac{1}{\|b\|}\Big),
\]
so by the triangle inequality,
\[
\left\|\frac{a}{\|a\|}-\frac{b}{\|b\|}\right\|
\le \frac{\|a-b\|}{\|a\|}+\|b\|\cdot\frac{\big|\|b\|-\|a\|\big|}{\|a\|\|b\|}
\le \frac{\|a-b\|}{\|a\|}+\frac{\|a-b\|}{\|a\|}
=\frac{2\|a-b\|}{\|a\|},
\]
and symmetrizing in $(a,b)$ yields \eqref{eq:norm_lipschitz}.
\end{proof}

\paragraph{A bound on cosine misalignment.}
\begin{lemma}
\label{lem:turning_bound}
Under Assumption~\ref{ass:lipschitz_nondeg}, for any $\vartheta\in\{\theta^b,\theta^s\}$ and any $\ell\in[L-1]$,
\begin{equation}
\label{eq:turning_bound_pointwise}
1-\cos\!\big(r_{\ell+1}(\vartheta),r_\ell(\vartheta)\big)
\;\le\;\frac{4}{m^2}\Big(L_h^2\|r_\ell(\vartheta)\|^2 + L_\theta^2\|\vartheta_{\ell+1}-\vartheta_\ell\|^2\Big).
\end{equation}
Consequently,
\begin{equation}
\label{eq:avg_turning_bound}
\frac{1}{L-1}\sum_{\ell=1}^{L-1}\cos\!\big(r_{\ell+1}(\vartheta),r_\ell(\vartheta)\big)
\;\ge\;
1-\frac{4L_h^2M^2}{m^2}
-\frac{4L_\theta^2}{m^2}\cdot\frac{1}{L-1}\sum_{\ell=1}^{L-1}\|\vartheta_{\ell+1}-\vartheta_\ell\|^2.
\end{equation}
\end{lemma}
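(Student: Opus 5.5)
The plan is to chain Lemma~\ref{lem:cos_unit_dist} and Lemma~\ref{lem:norm_lipschitz} with the Lipschitz condition of Assumption~\ref{ass:lipschitz_nondeg}, and then average over $\ell$.

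Fix $\vartheta\in\{\theta^b,\theta^s\}$ and $\ell\in[L-1]$, and set $a=r_{\ell+1}(\vartheta)$ and $b=r_\ell(\vartheta)$. Both vectors are nonzero, with norms at least $m$, by the non-degeneracy part of the assumption.

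\textbf{Pointwise bound.}
\begin{enumerate}
\item By Lemma~\ref{lem:cos_unit_dist}, $1-\cos(a,b)=\tfrac12\|a/\|a\|-b/\|b\|\|^2$.
\item By Lemma~\ref{lem:norm_lipschitz} and $\min(\|a\|,\|b\|)\ge m$, this is at most $\tfrac12\cdot\tfrac{4}{m^2}\|a-b\|^2=\tfrac{2}{m^2}\|a-b\|^2$.
\item Since $a=F(h_{\ell+1};\vartheta_{\ell+1})$ and $b=F(h_\ell;\vartheta_\ell)$, the Lipschitz bound \eqref{eq:F_Lip} gives
\[
\|a-b\|\le L_h\|h_{\ell+1}-h_\ell\|+L_\theta\|\vartheta_{\ell+1}-\vartheta_\ell\|=L_h\|r_\ell(\vartheta)\|+L_\theta\|\vartheta_{\ell+1}-\vartheta_\ell\|.
\]
The key point is that $h_{\ell+1}-h_\ell=r_\ell$ by definition.
\item Squaring with $(x+y)^2\le 2x^2+2y^2$ and multiplying by $\tfrac{2}{m^2}$ yields exactly \eqref{eq:turning_bound_pointwise}.
\end{enumerate}

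\textbf{Averaged bound.} Sum \eqref{eq:turning_bound_pointwise} over $\ell=1,\dots,L-1$ and divide by $L-1$. Bound each $\|r_\ell(\vartheta)\|^2\le M^2$ using the upper bound in the assumption, so the first term averages to at most $\tfrac{4L_h^2M^2}{m^2}$. Rearranging $\tfrac{1}{L-1}\sum(1-\cos)\le\cdots$ gives \eqref{eq:avg_turning_bound}.

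\textbf{Main obstacle.} The only delicate point is that the Lipschitz condition and the norm bounds must hold on the region containing both trajectories, which is exactly the hypothesis as stated. The constant bookkeeping, $\tfrac12\cdot 4\cdot 2=4$, also needs care. Everything else is routine.
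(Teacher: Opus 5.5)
Your proposal is correct and follows the paper's proof essentially step for step. You chain the unit-vector identity and the normalization bound using $\min(\|r_{\ell+1}\|,\|r_\ell\|)\ge m$, then apply the Lipschitz property of $F$ with $h_{\ell+1}-h_\ell=r_\ell$, the inequality $(x+y)^2\le 2x^2+2y^2$, and finally averaging with $\|r_\ell\|\le M$, with the same constant $\tfrac12\cdot 4\cdot 2=4$.
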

\begin{proof}
Fix $\vartheta$ and $\ell$. By Lemma~\ref{lem:cos_unit_dist} and Lemma~\ref{lem:norm_lipschitz},
\[
1-\cos(r_{\ell+1},r_\ell)
=\frac12\|d_{\ell+1}-d_\ell\|^2
\le \frac12\left(\frac{2\|r_{\ell+1}-r_\ell\|}{\min(\|r_{\ell+1}\|,\|r_\ell\|)}\right)^2
\le \frac{2}{m^2}\|r_{\ell+1}-r_\ell\|^2.
\]
Using \eqref{eq:r_def} and the Lipschitz property \eqref{eq:F_Lip},
\[
\|r_{\ell+1}-r_\ell\|
=\big\|F(h_{\ell+1};\vartheta_{\ell+1})-F(h_\ell;\vartheta_\ell)\big\|
\le L_h\|h_{\ell+1}-h_\ell\|+L_\theta\|\vartheta_{\ell+1}-\vartheta_\ell\|
= L_h\|r_\ell\|+L_\theta\|\vartheta_{\ell+1}-\vartheta_\ell\|.
\]
Squaring and applying $(a+b)^2\le 2a^2+2b^2$ yields
\[
\|r_{\ell+1}-r_\ell\|^2
\le 2L_h^2\|r_\ell\|^2+2L_\theta^2\|\vartheta_{\ell+1}-\vartheta_\ell\|^2.
\]
Combining the last two inequalities gives \eqref{eq:turning_bound_pointwise}. Averaging \eqref{eq:turning_bound_pointwise} over $\ell$ and using $\|r_\ell(\vartheta)\|\le M$ gives \eqref{eq:avg_turning_bound}.
\end{proof}

\paragraph{Local averaging smoother and its action on update differences.}
We now take $S$ to be the \emph{local averaging} (tridiagonal) smoother with strength $\alpha\in(0,\tfrac12]$:
\begin{equation}
\label{eq:S_local_avg}
(Su)_\ell
=
\begin{cases}
\big(1-\tfrac{\alpha}{2}\big)u_1+\tfrac{\alpha}{2}u_2, & \ell=1,\\[2pt]
(1-\alpha)u_\ell+\tfrac{\alpha}{2}(u_{\ell-1}+u_{\ell+1}), & 2\le \ell\le L-1,\\[2pt]
\big(1-\tfrac{\alpha}{2}\big)u_L+\tfrac{\alpha}{2}u_{L-1}, & \ell=L.
\end{cases}
\end{equation}
Let $D\in\mathbb{R}^{(L-1)\times L}$ denote the first-difference operator $(Dv)_\ell=v_{\ell+1}-v_\ell$.

\begin{lemma}[Induced local averaging on first differences]
\label{lem:DS_equals_S1D}
Let $D$ be the first-difference operator $(Du)_\ell=u_{\ell+1}-u_\ell$ for $\ell=1,\dots,L-1$.
Let $S$ be the local averaging operator defined in \eqref{eq:S_local_avg}.
Define $S^{(1)}\in\mathbb{R}^{(L-1)\times(L-1)}$ by
\[
(S^{(1)}w)_\ell=
\begin{cases}
(1-\alpha)w_1+\tfrac{\alpha}{2}w_2, & \ell=1,\\[2pt]
(1-\alpha)w_\ell+\tfrac{\alpha}{2}(w_{\ell-1}+w_{\ell+1}), & 2\le \ell\le L-2,\\[2pt]
(1-\alpha)w_{L-1}+\tfrac{\alpha}{2}w_{L-2}, & \ell=L-1.
\end{cases}
\]
Then for all block-vectors $u$,
\[
D(Su)=S^{(1)}(Du).
\]
\end{lemma}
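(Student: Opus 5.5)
The identity is linear and acts identically on every within-block coordinate, so I will prove it entrywise. Fix a block-vector $u=(u_1,\dots,u_L)$ and write $w=Du$, so $w_\ell=u_{\ell+1}-u_\ell$ for $\ell=1,\dots,L-1$. The plan is to compute $(D(Su))_\ell=(Su)_{\ell+1}-(Su)_\ell$ directly from \eqref{eq:S_local_avg} and rewrite it in terms of the $w$'s. The result should match the corresponding row of $S^{(1)}$. I will split into interior indices and the two boundary indices. The boundary rows of $S$ are the only place where the computation is not a pure translation-invariance argument.

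\emph{Interior case, $2\le\ell\le L-2$.} Here both $\ell$ and $\ell+1$ lie in $\{2,\dots,L-1\}$, so the interior formula applies to both rows:
\[
(Su)_{\ell+1}-(Su)_\ell=(1-\alpha)(u_{\ell+1}-u_\ell)+\tfrac{\alpha}{2}\big[(u_\ell-u_{\ell-1})+(u_{\ell+2}-u_{\ell+1})\big].
\]
This equals $(1-\alpha)w_\ell+\tfrac{\alpha}{2}(w_{\ell-1}+w_{\ell+1})$, because the interior stencil is a convolution and commutes with differencing.

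\emph{Left boundary, $\ell=1$ (assuming $L\ge3$).} Use $(Su)_1=(1-\tfrac\alpha2)u_1+\tfrac\alpha2u_2$ and the interior formula for $(Su)_2$. The difference is
\[
u_2\big[(1-\alpha)-\tfrac\alpha2\big]+u_1\big[\tfrac\alpha2-(1-\tfrac\alpha2)\big]+\tfrac\alpha2u_3
=(1-\tfrac{3\alpha}{2})u_2-(1-\alpha)u_1+\tfrac\alpha2u_3.
\]
Splitting $\tfrac\alpha2u_3=\tfrac\alpha2w_2+\tfrac\alpha2u_2$ turns this into $(1-\alpha)(u_2-u_1)+\tfrac\alpha2w_2=(1-\alpha)w_1+\tfrac\alpha2w_2$, as claimed.

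\emph{Right boundary, $\ell=L-1$.} This case is symmetric, since $S$ is invariant under the reversal $\ell\mapsto L+1-\ell$. A direct expansion of $(1-\tfrac\alpha2)u_L+\tfrac\alpha2u_{L-1}-(Su)_{L-1}$ gives $(1-\alpha)w_{L-1}+\tfrac\alpha2w_{L-2}$. The half-weight at the boundary is exactly what makes the missing ghost difference vanish, which is a Neumann-type condition.

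The main care point is the degenerate small-$L$ situation. For $L=2$, $S^{(1)}$ is $1\times1$, and the formula should be read as $(1-\alpha)w_1$, with both boundary rows coinciding and no neighbor terms. I will check this directly: $(Su)_2-(Su)_1=(1-\alpha)(u_2-u_1)$. For $L=3$ there is no interior row, and only the two boundary computations are needed. Otherwise the proof is a routine expansion; the main risk is bookkeeping the boundary coefficients correctly.
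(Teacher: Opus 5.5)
Your proposal is correct and follows the same route as the paper. Both set $w=Du$, expand $(Su)_{\ell+1}-(Su)_\ell$ separately for the interior indices $2\le \ell\le L-2$ and for the boundary indices $\ell=1$ and $\ell=L-1$, and match each case to the corresponding row of $S^{(1)}$; your explicit boundary algebra and the $L=2,3$ checks simply spell out details the paper leaves implicit.
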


\begin{proof}
Let $w:=Du$, i.e.\ $w_\ell=u_{\ell+1}-u_\ell$.
For interior $\ell\in\{2,\dots,L-2\}$, expanding $(Su)_{\ell+1}-(Su)_\ell$ using \eqref{eq:S_local_avg} gives
\[
(D(Su))_\ell=(1-\alpha)w_\ell+\tfrac{\alpha}{2}w_{\ell-1}+\tfrac{\alpha}{2}w_{\ell+1}.
\]
At $\ell=1$,
\[
(D(Su))_1=(Su)_2-(Su)_1=(1-\alpha)w_1+\tfrac{\alpha}{2}w_2.
\]
At $\ell=L-1$,
\[
(D(Su))_{L-1}=(Su)_L-(Su)_{L-1}=(1-\alpha)w_{L-1}+\tfrac{\alpha}{2}w_{L-2}.
\]
These are exactly the defining rules of $S^{(1)}$ applied to $w$, hence $D(Su)=S^{(1)}w=S^{(1)}(Du)$.
\end{proof}

\begin{lemma}[Eigenvalues of induced averaging on first differences]
\label{lem:Sdiff_eigs}
Let $n\ge 1$ and define $\widetilde S^{(n)}\in\mathbb{R}^{n\times n}$ by the tridiagonal Toeplitz form
\[
(\widetilde S^{(n)}w)_i
=
(1-\alpha)w_i+\frac{\alpha}{2}\big(\mathbf{1}_{\{i>1\}}w_{i-1}+\mathbf{1}_{\{i<n\}}w_{i+1}\big),
\qquad i=1,\dots,n,
\]
with $\alpha\in(0,1)$. Then $\widetilde S^{(n)}$ is symmetric and has eigenpairs
\[
v^{(k)}_j=\sin\!\Big(\frac{\pi k j}{n+1}\Big),
\qquad
\widetilde\mu_k^{(n)} = 1-\alpha\Big(1-\cos\Big(\frac{\pi k}{n+1}\Big)\Big),
\qquad k=1,\dots,n.
\]
In particular, $\|\widetilde S^{(n)}\|_{\mathrm{op}}=\widetilde\mu_1^{(n)}<1$.
\end{lemma}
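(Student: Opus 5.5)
The plan is to write $\widetilde S^{(n)} = (1-\alpha)I + \tfrac{\alpha}{2}T$, where $T$ is the $n\times n$ adjacency matrix of the path graph: ones on the first super- and sub-diagonals, zeros elsewhere. This decomposition gives symmetry immediately, since both $I$ and $T$ are symmetric. It also reduces the spectral question to diagonalizing $T$, because any eigenpair $(\lambda,v)$ of $T$ yields the eigenpair $(1-\alpha+\tfrac{\alpha}{2}\lambda,\,v)$ of $\widetilde S^{(n)}$.

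To diagonalize $T$, I would verify directly that $v^{(k)}_j=\sin(\pi kj/(n+1))$ satisfies $Tv^{(k)} = 2\cos(\pi k/(n+1))\,v^{(k)}$.

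For an interior index $1<j<n$, the product-to-sum identity gives
\[
\sin\big(\theta(j-1)\big)+\sin\big(\theta(j+1)\big)=2\cos\theta\,\sin(\theta j),
\qquad \theta=\tfrac{\pi k}{n+1}.
\]
The boundary rows are handled by extending the formula to $j=0$ and $j=n+1$, where $\sin 0=0$ and $\sin(\pi k)=0$. These vanishing values are exactly the missing neighbor terms in rows $1$ and $n$, so the same identity holds there. The only point requiring care is this boundary bookkeeping, and the Dirichlet-type vanishing is what makes it work.

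Each $v^{(k)}$ is nonzero for $k=1,\dots,n$; for instance its $j=1$ entry is $\sin(\pi k/(n+1))>0$. The eigenvalues $2\cos(\pi k/(n+1))$ are distinct, since cosine is strictly decreasing on $(0,\pi)$. Hence these $n$ vectors form a complete eigenbasis, and the eigenvalues of $\widetilde S^{(n)}$ are exactly $\widetilde\mu_k^{(n)} = 1-\alpha+\alpha\cos(\pi k/(n+1)) = 1-\alpha\big(1-\cos(\pi k/(n+1))\big)$.

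For the operator norm, symmetry gives $\|\widetilde S^{(n)}\|_{\mathrm{op}}=\max_k|\widetilde\mu_k^{(n)}|$. The eigenvalues decrease in $k$, so the largest is $\widetilde\mu_1^{(n)}$ and the smallest is $\widetilde\mu_n^{(n)}=1-\alpha-\alpha\cos(\pi/(n+1))$. It remains to check $|\widetilde\mu_n^{(n)}|\le\widetilde\mu_1^{(n)}$. The upper side $\widetilde\mu_n^{(n)}\le\widetilde\mu_1^{(n)}$ is monotonicity. For the lower side, $-\widetilde\mu_n^{(n)}\le\widetilde\mu_1^{(n)}$ is equivalent to
\[
\widetilde\mu_1^{(n)}+\widetilde\mu_n^{(n)} = 2(1-\alpha)\ge 0,
\]
which holds because $\alpha<1$; here I use $\cos(\pi n/(n+1))=-\cos(\pi/(n+1))$.

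Finally, $\widetilde\mu_1^{(n)}<1$ because $\alpha>0$ and $\cos(\pi/(n+1))<1$. Moreover $\widetilde\mu_1^{(n)}>0$, since $\cos(\pi/(n+1))\ge 0$ for $n\ge 1$. This gives $\|\widetilde S^{(n)}\|_{\mathrm{op}}=\widetilde\mu_1^{(n)}<1$.

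The case $n=1$ is trivial: the matrix is the scalar $1-\alpha$, which matches the formula since $\cos(\pi/2)=0$. For the application, taking $n=L-1$ gives $\mu_\star$ in \eqref{eq:mu_star_main}, because $\pi/(n+1)=\pi/L$.
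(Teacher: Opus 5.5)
Your proposal is correct and follows essentially the same route as the paper: it verifies the sine eigenvectors directly via $\sin((j-1)t)+\sin((j+1)t)=2\cos t\,\sin(jt)$, uses the vanishing $\sin 0=\sin((n+1)t)=0$ to handle the boundary rows, and then uses symmetry for the operator norm. Your version is slightly more complete than the paper's, because you check that the largest absolute eigenvalue is $\widetilde\mu_1^{(n)}$ rather than $|\widetilde\mu_n^{(n)}|$ (via $\widetilde\mu_1^{(n)}+\widetilde\mu_n^{(n)}=2(1-\alpha)\ge 0$) and that the $n$ listed eigenpairs exhaust the spectrum; the paper asserts both without checking.
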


\begin{proof}
Fix $k\in\{1,\dots,n\}$ and set $t:=\pi k/(n+1)$. For $2\le j\le n-1$,
\[
(\widetilde S^{(n)}v^{(k)})_j
=(1-\alpha)\sin(jt)+\frac{\alpha}{2}\big(\sin((j-1)t)+\sin((j+1)t)\big)
=\big(1-\alpha+\alpha\cos t\big)\sin(jt),
\]
using $\sin((j-1)t)+\sin((j+1)t)=2\sin(jt)\cos t$.
At $j=1$ and $j=n$ the same identity holds since the missing neighbor corresponds to $\sin(0)=\sin((n+1)t)=0$.
Thus $v^{(k)}$ is an eigenvector with eigenvalue $1-\alpha+\alpha\cos t=1-\alpha(1-\cos t)$.
The operator norm equals the largest absolute eigenvalue $\widetilde\mu_1^{(n)}$ since $\widetilde S^{(n)}$ is symmetric.
\end{proof}
\begin{proposition}
\label{prop:Du_contraction}
Let $S^{(1)}\in\mathbb{R}^{(L-1)\times(L-1)}$ be the induced operator on differences from Lemma~\ref{lem:DS_equals_S1D}
(i.e.\ with boundary diagonal $1-\alpha$). Then for all $w\in\mathbb{R}^{L-1}$,
\begin{equation}
\label{eq:S1_contraction_new}
\|S^{(1)}w\|
\;\le\;\mu_\star\,\|w\|,
\qquad
\mu_\star := 1-\alpha\Big(1-\cos\Big(\frac{\pi}{L}\Big)\Big)\;<\;1,
\end{equation}
and hence
\begin{equation}
\label{eq:energy_drop_new}
\|w\|^2-\|S^{(1)}w\|^2
\;\ge\;\big(1-\mu_\star^2\big)\,\|w\|^2.
\end{equation}
\end{proposition}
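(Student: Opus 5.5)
The plan is to identify $S^{(1)}$ with the tridiagonal Toeplitz matrix $\widetilde S^{(n)}$ of Lemma~\ref{lem:Sdiff_eigs} for $n=L-1$, and then read off both inequalities from its spectrum.

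First, compare the defining rules in Lemma~\ref{lem:DS_equals_S1D} with the formula in Lemma~\ref{lem:Sdiff_eigs}. The diagonal of $S^{(1)}$ is $1-\alpha$ everywhere, including both boundary rows, and the off-diagonals are $\tfrac{\alpha}{2}$. Hence $S^{(1)}=\widetilde S^{(L-1)}$ exactly. This is where it matters that the induced operator has boundary diagonal $1-\alpha$ rather than the $1-\tfrac{\alpha}{2}$ appearing in $S$ itself.

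Next, apply Lemma~\ref{lem:Sdiff_eigs} with $n=L-1$, so that $n+1=L$. The eigenvalues are $\widetilde\mu_k=1-\alpha(1-\cos(\pi k/L))$ for $k=1,\dots,L-1$. The spectral norm of a symmetric matrix is its largest absolute eigenvalue, so I must check that $\widetilde\mu_1$ dominates in absolute value. Since $\cos(\pi k/L)$ decreases in $k$, the $\widetilde\mu_k$ decrease, and $\widetilde\mu_1$ is the largest. The smallest satisfies $\widetilde\mu_{L-1}\ge 1-2\alpha\ge 0$ because $\alpha\le\tfrac12$. All eigenvalues are therefore nonnegative, and $\|S^{(1)}\|_{\mathrm{op}}=\widetilde\mu_1=\mu_\star$.

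The lemma itself only asserts $\alpha\in(0,1)$. For $\alpha>\tfrac12$ the bottom eigenvalue could be negative, so this nonnegativity check is the one genuine step to verify, and it is exactly where the hypothesis $\alpha\le\tfrac12$ enters.

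The strict inequality $\mu_\star<1$ follows from $\alpha>0$ and $\cos(\pi/L)<1$ for $L\ge2$. The degenerate case $L=1$ has no differences and is vacuous. This gives $\|S^{(1)}w\|\le\mu_\star\|w\|$ for scalar entries.

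For block vectors $w\in(\mathbb{R}^p)^{L-1}$, as used in the theorem, $S^{(1)}$ acts as $S^{(1)}\otimes I$. Its operator norm is unchanged, or equivalently one can apply the scalar bound coordinatewise and sum the squares.

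Finally, squaring $\|S^{(1)}w\|\le\mu_\star\|w\|$ and subtracting both sides from $\|w\|^2$ gives $\|w\|^2-\|S^{(1)}w\|^2\ge(1-\mu_\star^2)\|w\|^2$, which is \eqref{eq:energy_drop_new}.
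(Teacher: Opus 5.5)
Your proposal is correct and follows the paper's proof: identify $S^{(1)}$ with $\widetilde S^{(L-1)}$, read off $\|S^{(1)}\|_{\mathrm{op}}=\widetilde\mu_1^{(L-1)}=\mu_\star$ from Lemma~\ref{lem:Sdiff_eigs}, then square. Your explicit check that $\widetilde\mu_1$ dominates in absolute value fills in a step the paper only asserts, but $\alpha\le\tfrac12$ is sufficient rather than essential there: since $\widetilde\mu_k+\widetilde\mu_{L-k}=2(1-\alpha)\ge 0$, the top eigenvalue dominates for every $\alpha\in(0,1]$, even when the bottom eigenvalue is negative.
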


\begin{proof}
By Lemma~\ref{lem:Sdiff_eigs} with $n=L-1$, the eigenvalues of $S^{(1)}=\widetilde S^{(L-1)}$ are
$\widetilde\mu_k^{(L-1)} = 1-\alpha(1-\cos(\pi k/L))$ for $k=1,\dots,L-1$.
Thus $\|S^{(1)}\|_{\mathrm{op}}=\widetilde\mu_1^{(L-1)}=\mu_\star<1$, giving \eqref{eq:S1_contraction_new}.
Then \eqref{eq:energy_drop_new} follows from $\|S^{(1)}w\|^2\le \mu_\star^2\|w\|^2$.
\end{proof}

\paragraph{General one-step bound for the average cosine alignment.}
Define the average cosine across consecutive residual increments:
\begin{equation}
\label{eq:avg_cos_def}
\overline{\cos}(\vartheta)
:=
\frac{1}{L-1}\sum_{\ell=1}^{L-1}
\cos\!\big(r_{\ell+1}(\vartheta),r_\ell(\vartheta)\big).
\end{equation}
We now state the general one-step comparison, allowing the current parameters to have pre-existing depth variation.

\begin{corollary}
\label{cor:avg_cos_general}
Assume Assumption~\ref{ass:lipschitz_nondeg}. Let $S$ be the local-averaging smoother \eqref{eq:S_local_avg}
with parameter $\alpha\in(0,\tfrac12]$, and let $S^{(1)}$ be the induced $(L-1)\times(L-1)$ averaging operator
from Lemma~\ref{lem:DS_equals_S1D}. Define
\[
\delta := D\theta,
\qquad
w := Du(\theta),
\]
where $D$ is the first-difference operator and all norms and inner products are taken over depth-stacked block vectors.
Then the smoothed and baseline one-step updates satisfy
\begin{align}
\label{eq:avg_cos_general_s}
\overline{\cos}(\theta^s)
&\ge
1-\frac{4L_h^2M^2}{m^2}
-\frac{4L_\theta^2}{m^2}\cdot\frac{1}{L-1}\,
\big\|\delta-\eta S^{(1)}w\big\|^2,\\
\label{eq:avg_cos_general_b}
\overline{\cos}(\theta^b)
&\ge
1-\frac{4L_h^2M^2}{m^2}
-\frac{4L_\theta^2}{m^2}\cdot\frac{1}{L-1}\,
\big\|\delta-\eta w\big\|^2.
\end{align}
Moreover, the difference between the two lower bounds admits the exact identity
\begin{equation}
\label{eq:cert_gap_exact}
\Big[\text{RHS of \eqref{eq:avg_cos_general_s}}\Big]
-
\Big[\text{RHS of \eqref{eq:avg_cos_general_b}}\Big]
=
\frac{4L_\theta^2}{m^2}\cdot\frac{1}{L-1}
\left[
\eta^2\big(\|w\|^2-\|S^{(1)}w\|^2\big)
-2\eta\big\langle \delta,(I-S^{(1)})w\big\rangle
\right].
\end{equation}
In particular, if \(\eta>0\) and \(L_\theta>0\), smoothing yields a strictly improved lower bound whenever
\begin{equation}
\label{eq:improvement_condition}
\big\langle \delta,(I-S^{(1)})w\big\rangle
<
\frac{\eta}{2}\Big(\|w\|^2-\|S^{(1)}w\|^2\Big).
\end{equation}
A simple sufficient condition for such is
\begin{equation}
\label{eq:improvement_condition_sufficient}
\|\delta\|
<
\frac{\eta}{2}\cdot
\frac{1-\mu_\star^2}{\|I-S^{(1)}\|_{\mathrm{op}}}\,\|w\|,
\qquad
\mu_\star:=\|S^{(1)}\|_{\mathrm{op}}<1.
\end{equation}
\end{corollary}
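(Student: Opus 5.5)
The argument has three parts. First we apply Lemma~\ref{lem:turning_bound} to $\vartheta=\theta^s$ and to $\vartheta=\theta^b$. Then we compute the depth differences of these updated parameters using linearity of $D$ and Lemma~\ref{lem:DS_equals_S1D}. Finally we expand the squared norms to get the exact gap, and bound that gap with Proposition~\ref{prop:Du_contraction}.

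\textbf{Step 1 (the two lower bounds).} Assumption~\ref{ass:lipschitz_nondeg} holds for both $\theta^b$ and $\theta^s$, so \eqref{eq:avg_turning_bound} applies to each. It gives a lower bound on $\overline{\cos}(\vartheta)$ in terms of $\sum_{\ell}\|\vartheta_{\ell+1}-\vartheta_\ell\|^2=\|D\vartheta\|^2$. For the baseline, linearity gives $D\theta^b=D\theta-\eta Du(\theta)=\delta-\eta w$, which yields \eqref{eq:avg_cos_general_b}. For smoothing, $D\theta^s=\delta-\eta D(Su)$. Lemma~\ref{lem:DS_equals_S1D} gives $D(Su)=S^{(1)}Du=S^{(1)}w$; this holds blockwise because $S$ acts as $S\otimes I$ on block vectors. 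This yields \eqref{eq:avg_cos_general_s}.

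\textbf{Step 2 (exact gap).} Subtract the two right-hand sides. The common terms cancel, leaving
\[
\frac{4L_\theta^2}{m^2}\cdot\frac{1}{L-1}\bigl(\|\delta-\eta w\|^2-\|\delta-\eta S^{(1)}w\|^2\bigr).
\]
Expand both squares. The $\|\delta\|^2$ terms cancel and what remains is
\[
\eta^2\bigl(\|w\|^2-\|S^{(1)}w\|^2\bigr)-2\eta\bigl(\langle\delta,w\rangle-\langle\delta,S^{(1)}w\rangle\bigr),
\]
which is \eqref{eq:cert_gap_exact}. When $\eta>0$ and $L_\theta>0$, the prefactor is positive. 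Dividing by $2\eta$ shows the gap is strictly positive exactly when \eqref{eq:improvement_condition} holds.

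\textbf{Step 3 (sufficient condition).} By Cauchy--Schwarz,
\[
\langle\delta,(I-S^{(1)})w\rangle\le\|\delta\|\,\|I-S^{(1)}\|_{\mathrm{op}}\|w\|.
\]
By \eqref{eq:energy_drop_new},
\[
\tfrac{\eta}{2}\bigl(\|w\|^2-\|S^{(1)}w\|^2\bigr)\ge\tfrac{\eta}{2}(1-\mu_\star^2)\|w\|^2.
\]
So \eqref{eq:improvement_condition} follows whenever
\[
\|\delta\|\,\|I-S^{(1)}\|_{\mathrm{op}}\|w\|<\tfrac{\eta}{2}(1-\mu_\star^2)\|w\|^2,
\]
which is \eqref{eq:improvement_condition_sufficient} after dividing by $\|w\|\,\|I-S^{(1)}\|_{\mathrm{op}}$. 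Two degenerate cases need care. Note $\|I-S^{(1)}\|_{\mathrm{op}}>0$ because $\alpha>0$. If $w=0$, the hypothesis \eqref{eq:improvement_condition_sufficient} reads $\|\delta\|<0$ and is vacuous. Finally, $\mu_\star<1$ holds by Lemma~\ref{lem:Sdiff_eigs}. All eigenvalues of $S^{(1)}$ lie in $[1-2\alpha,1)\subset[0,1)$ for $\alpha\le\tfrac12$, so the top eigenvalue is the operator norm; this is where the restriction $\alpha\le\tfrac12$ is used.

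\textbf{Main obstacle.} Most of the argument is bookkeeping. The one point that needs care is Step 1's use of Lemma~\ref{lem:DS_equals_S1D} on block vectors: the commutation $D(S\otimes I)=(S^{(1)}\otimes I)D$ must hold coordinatewise. Norms on $(\mathbb{R}^p)^{L-1}$ must then be handled via $S^{(1)}\otimes I$, whose operator norm is still $\mu_\star$.
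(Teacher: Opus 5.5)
Your proposal is correct and follows the paper's proof essentially step for step. You apply Lemma~\ref{lem:turning_bound} to $\theta^s$ and $\theta^b$ with $D\theta^s=\delta-\eta S^{(1)}w$ from Lemma~\ref{lem:DS_equals_S1D}, expand the two squared norms to get the exact gap, and use Cauchy--Schwarz with Proposition~\ref{prop:Du_contraction} for the sufficient condition. Your treatment of $w=0$ and of the Kronecker form $S^{(1)}\otimes I$ fills in details the paper leaves implicit.

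One small correction: $\alpha\le\tfrac12$ is not actually needed for $\|S^{(1)}\|_{\mathrm{op}}=\mu_\star$. The eigenvalues $\widetilde\mu_k$ of Lemma~\ref{lem:Sdiff_eigs} satisfy $\widetilde\mu_1+\widetilde\mu_k\ge 2(1-\alpha)>0$, which already gives $|\widetilde\mu_k|\le\widetilde\mu_1$ for every $\alpha<1$.
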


\begin{proof}
The bounds \eqref{eq:avg_cos_general_s}--\eqref{eq:avg_cos_general_b} follow by applying
Lemma~\ref{lem:turning_bound} with $\vartheta=\theta^s$ and $\vartheta=\theta^b$. Indeed,
\[
D\theta^s
=
D(\theta-\eta Su)
=
D\theta-\eta D(Su)
=
\delta-\eta S^{(1)}w,
\]
where we used Lemma~\ref{lem:DS_equals_S1D}, and similarly
\[
D\theta^b
=
D(\theta-\eta u)
=
\delta-\eta w.
\]
Substituting these identities into \eqref{eq:avg_turning_bound} gives
\eqref{eq:avg_cos_general_s} and \eqref{eq:avg_cos_general_b}.

For \eqref{eq:cert_gap_exact}, subtract the right-hand sides of
\eqref{eq:avg_cos_general_s} and \eqref{eq:avg_cos_general_b}. The only changing term is the squared depth-difference
term, and
\[
\|\delta-\eta S^{(1)}w\|^2-\|\delta-\eta w\|^2
=
\eta^2\big(\|S^{(1)}w\|^2-\|w\|^2\big)
+
2\eta\big\langle \delta,(I-S^{(1)})w\big\rangle.
\]
Multiplying by
\[
-\frac{4L_\theta^2}{m^2}\cdot\frac{1}{L-1}
\]
yields \eqref{eq:cert_gap_exact}. Condition \eqref{eq:improvement_condition} makes the right-hand side strictly
positive.

Finally, \eqref{eq:improvement_condition_sufficient} follows from Cauchy--Schwarz and the contraction of
$S^{(1)}$. Specifically,
\[
\big\langle \delta,(I-S^{(1)})w\big\rangle
\le
\|\delta\|\,\|I-S^{(1)}\|_{\mathrm{op}}\|w\|,
\]
while Proposition~\ref{prop:Du_contraction} gives
\[
\|w\|^2-\|S^{(1)}w\|^2
\ge
(1-\mu_\star^2)\|w\|^2.
\]
Combining these two inequalities gives the stated sufficient condition.
\end{proof}

\begin{remark}[Depth-flat special case]
\label{rem:depth_flat_special_case}
If the current parameters are depth-flat, $\theta_1=\cdots=\theta_L$, then $\delta=D\theta=0$. In this case,
Corollary~\ref{cor:avg_cos_general} reduces to a purely update-induced comparison:
\begin{equation}
\label{eq:avg_cos_lower_smoothed_depth_flat}
\overline{\cos}(\theta^s)
\ge
1-\frac{4L_h^2M^2}{m^2}
-\frac{4L_\theta^2\eta^2}{m^2}\cdot
\frac{\|S^{(1)}w\|^2}{L-1}.
\end{equation}
The corresponding improvement over the baseline lower bound is
\begin{align}
\label{eq:avg_cos_cert_gap_depth_flat}
\Big[\text{RHS of \eqref{eq:avg_cos_lower_smoothed_depth_flat}}\Big]
-
\Big[\text{same bound with $\|S^{(1)}w\|^2$ replaced by $\|w\|^2$}\Big]
&=
\frac{4L_\theta^2\eta^2}{m^2}\cdot
\frac{\|w\|^2-\|S^{(1)}w\|^2}{L-1} \\
&\ge
\frac{4L_\theta^2\eta^2}{m^2}\cdot
\frac{1-\mu_\star^2}{L-1}\,\|w\|^2.
\end{align}
Thus, when $\eta>0$, $L_\theta>0$, and $w\neq 0$, depth-flat parameters yield a strict improvement in the lower bound. This special case isolates the mechanism: local averaging improves the bound solely by
contracting depth-to-depth variation in the optimizer update. The main result, Corollary~\ref{cor:avg_cos_general},
does not require this depth-flat condition and additionally accounts for interactions with pre-existing depth
variation $D\theta$.
\end{remark}

\FloatBarrier
\begin{figure*}[h]
    \centering
    \includegraphics[width=0.75\linewidth]{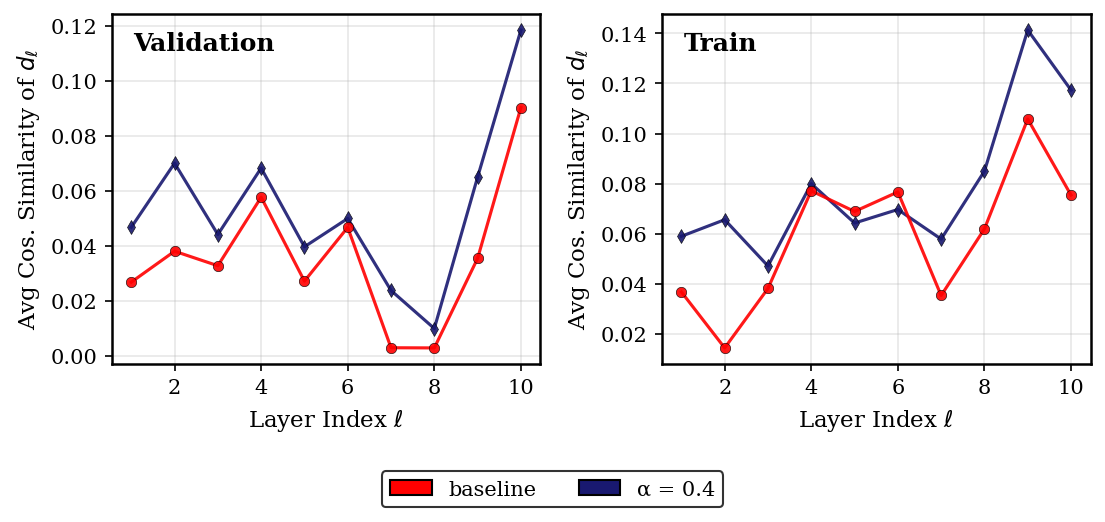}
    \caption{{\bf Gradient smoothing increases layer contribution alignment.} CLS token trajectories of ViT-B trained on CIFAR-100 for 1700 epochs, comparing baseline ($\alpha = 0$, red) against heavy smoothing ($\alpha = 0.4$, navy). For each layer difference $d_\ell = x_{\ell+1} - x_\ell$, we compute the mean cosine similarity with all other layer differences $d_j$ ($j \neq \ell$). Across both validation and training sets, the smoothed model exhibits consistently higher mean cosine similarity at most layers, indicating that gradient smoothing encourages more coherent directional updates throughout the network.}  
        \label{fig:cos_vs_depth}
\end{figure*}

\begin{figure}[h]
    \centering
    \includegraphics[width=0.5\linewidth]{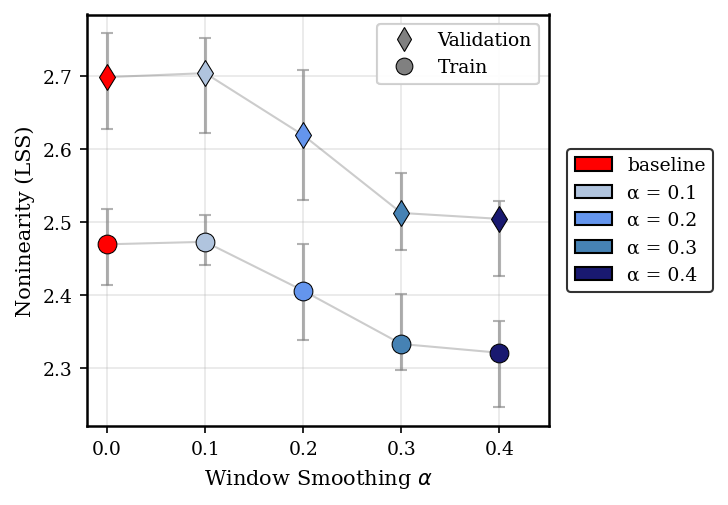}
    \caption{{\bf Greater linearity with increased smoothing.} Line Shape Score (LSS) of CLS token trajectories in ViT-B trained on CIFAR-100 (1700 epochs). Each point shows the median across 100 validation (rhombus) and training (circle) images, while error bars indicate interquartile range (Q1–Q3). As the window smoothing parameter $\alpha \in \{0.1, \ldots, 0.4 \}$ increases from the baseline ($\alpha$ = 0), median LSS decreases monotonically, indicating that gradient smoothing produces more linear CLS token trajectories.}
    \label{fig:lss_vs_alpha}
\end{figure}

\subsection{Variance-Reduction of Gradients with Smoothing}
\label{app:variance_reduction}

Here we provide a simple variance-based analysis of Gradients with the local window operator. Let $U \in \mathbb{R}^{Ld}$ denote the stacked stochastic (e.g.\ from minibatches) block update, and define the smoothed update by
\[
\tilde U = (S \otimes I_d) U,
\]
where $S$ is the depth-smoothing matrix.

\begin{proposition}[Variance under a single step of window smoothing]
\label{prop:variance_contraction}
Let $U \in \mathbb{R}^{Ld}$ be a random vector with $\mathbb{E}\|U\|_2^2 < \infty$ and covariance $\Sigma = \mathrm{Cov}(U)$, and let $\tilde U = (S \otimes I_d) U$ with $S$ the local window operator \eqref{eq:S_local_avg}. For any $\alpha \in [0,1]$,
\[
\mathbb{E}\|\tilde U - \mathbb{E}\tilde U\|_2^2 \;\le\; \mathbb{E}\|U - \mathbb{E}U\|_2^2.
\]
\end{proposition}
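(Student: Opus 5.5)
The plan is to reduce the claim to an operator-norm bound on $S$. Write $P=S\otimes I_d$ and $Z:=U-\mathbb{E}U$. Then $\tilde U-\mathbb{E}\tilde U=PZ$ by linearity of expectation, which is finite since $\mathbb{E}\|U\|_2^2<\infty$. Pointwise we have $\|PZ\|_2\le\|P\|_{\mathrm{op}}\|Z\|_2$. Taking expectations gives
\[
\mathbb{E}\|\tilde U-\mathbb{E}\tilde U\|_2^2\le\|P\|_{\mathrm{op}}^2\,\mathbb{E}\|U-\mathbb{E}U\|_2^2 .
\]
Equivalently, $\mathrm{tr}(P\Sigma P^\top)\le\|P\|_{\mathrm{op}}^2\,\mathrm{tr}(\Sigma)$. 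Since $\|S\otimes I_d\|_{\mathrm{op}}=\|S\|_{\mathrm{op}}$, it suffices to show $\|S\|_{\mathrm{op}}\le 1$ for every $\alpha\in[0,1]$.

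For the bound on $\|S\|_{\mathrm{op}}$, use that $S$ in \eqref{eq:S_local_avg} is symmetric, so $\|S\|_{\mathrm{op}}$ equals its spectral radius, and bound that by Gershgorin. The entries of $S$ are nonnegative for $\alpha\in[0,1]$: the diagonal entries are $1-\alpha$ or $1-\tfrac{\alpha}{2}$, and the off-diagonal entries are $\tfrac{\alpha}{2}$. Each row sums to $1$, as noted in Section~\ref{sec:smooth_operators}. Hence $\|S\|_\infty=1$, and by symmetry also $\|S\|_1=1$. Then $\|S\|_2\le\sqrt{\|S\|_1\|S\|_\infty}=1$, which already suffices.

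Alternatively, one can compute the spectrum explicitly. Write $S=I-\tfrac{\alpha}{2}\Delta_N$, where $\Delta_N$ is the Neumann path Laplacian with eigenvalues $2(1-\cos(\pi k/L))\in[0,4)$ for $k=0,\dots,L-1$. The eigenvalues of $S$ are then $1-\alpha(1-\cos(\pi k/L))$. These lie in $(1-2\alpha,1]\subset(-1,1]$ for $\alpha\in[0,1]$, which confirms the bound and parallels Lemma~\ref{lem:Sdiff_eigs}.

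The main point needing care is that nonnegativity of the entries of $S$, which is used in the $\ell_1/\ell_\infty$ bound, requires $\alpha\le 1$. That is exactly the range in the statement. Checking that the boundary rows keep row sum $1$ is routine. The remaining steps are routine: linearity of centering, and finiteness of the second moments.
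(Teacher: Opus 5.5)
Your proof is correct and has the same skeleton as the paper's: both reduce the variance inequality to $\|S\|_2\le 1$. The two sub-steps differ.

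\emph{Variance step.} The paper works with the covariance, writing the total variance as $\mathrm{tr}(A\Sigma)$ with $A=(S^\top S)\otimes I_d\preceq\|A\|_2 I$. Your pointwise bound $\|PZ\|_2\le\|P\|_{\mathrm{op}}\|Z\|_2$, followed by taking expectations, is shorter and never needs $\Sigma$.

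\emph{Contraction step.} The paper simply asserts the spectrum $1-\alpha(1-\cos(k\pi/L))$, $k=0,\dots,L-1$. Lemma~\ref{lem:Sdiff_eigs} only treats the operator with uniform diagonal $1-\alpha$ (i.e.\ $S^{(1)}$), not $S$ itself, whose boundary diagonal is $1-\tfrac{\alpha}{2}$. So that eigenvalue formula is never actually derived in the paper.

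Your main argument avoids the spectrum altogether. You use that $S$ is symmetric, entrywise nonnegative and row-stochastic, hence $\|S\|_2\le\sqrt{\|S\|_1\|S\|_\infty}=1$. This shows exactly where $\alpha\le 1$ enters. It also carries over verbatim to any nonnegative doubly stochastic smoother, such as the wider windows the paper mentions, with no eigenvalue computation. Your alternative $S=I-\tfrac{\alpha}{2}\Delta_N$ supplies the missing justification of the paper's formula. The spectral route additionally shows that, for $\alpha\in(0,1]$, only the depth-constant mode is not strictly contracted.

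One small wording point: you say you will bound the spectral radius by Gershgorin, but then use the $\ell_1/\ell_\infty$ interpolation bound. Either works here, since the Gershgorin intervals are $[1-2\alpha,1]$ for interior rows and $[1-\alpha,1]$ for boundary rows. Just name the one you actually use.
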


\begin{proof}
Since $S$ is symmetric with eigenvalues
\[
\lambda_k(S) = 1 - \alpha\!\left(1 - \cos\!\frac{k\pi}{L}\right), \qquad k = 0, \dots, L-1,
\]
we have $\|S\|_2 = 1$ for $\alpha \in [0,1]$.

By linearity, $\mathrm{Cov}(\tilde U) = (S \otimes I_d)\,\Sigma\,(S \otimes I_d)^\top$, so the total stochastic update variance is
\[
\mathbb{E}\|\tilde U - \mathbb{E}\tilde U\|_2^2
= \mathrm{tr}\!\bigl((S \otimes I_d)\,\Sigma\,(S \otimes I_d)^\top\bigr)
= \mathrm{tr}\!\bigl(((S^\top S) \otimes I_d)\,\Sigma\bigr).
\]
Let $A := (S^\top S) \otimes I_d$. Since $\Sigma$ is a covariance matrix, $\Sigma \succeq 0$, and clearly $A \succeq 0$ as well. Then $A \preceq \|A\|_2\,I$, which implies
\[
\mathrm{tr}(A\Sigma) = \mathrm{tr}\!\bigl(\Sigma^{1/2} A\,\Sigma^{1/2}\bigr) \le \|A\|_2\,\mathrm{tr}(\Sigma).
\]
Since
\[
\|A\|_2 = \|(S^\top S) \otimes I_d\|_2 = \|S^\top S\|_2 = \|S\|_2^2 = 1,
\]
we obtain
\[
\mathbb{E}\|\tilde U - \mathbb{E}\tilde U\|_2^2 \le \mathrm{tr}(\Sigma) = \mathbb{E}\|U - \mathbb{E}U\|_2^2. \qedhere
\]
\end{proof}

Thus, for $\alpha \in [0,1]$, window smoothing update variance is at most that of the blockwise stochastic update.

\subsection{Empirical Measurement of Gradient Variance}
\label{app:variance_empirical}

To complement the variance contraction bound of Proposition~\ref{prop:variance_contraction}, we directly measure two notions of stochastic gradient variance during pretraining and compare them across baseline and smoothing runs.

\paragraph{Setup.}
We train three otherwise-identical \texttt{nanochat} d24 models (1.38B parameters, 24 transformer blocks) for 7,308 optimization steps: a baseline run and two window-smoothing runs with $\alpha \in \{0.05,\,0.1\}$. Every $50$ steps, before applying the optimizer update, we record per-microbatch gradients for each parameter group $(r,\ell)$, where $r$ indexes one of $R=6$ roles within a block (the four attention projections and the two MLP linears) and $\ell \in \{1,\ldots,L\}$ indexes block depth. Let $g_{r,\ell,n}$ denote the flattened gradient for role $r$, layer $\ell$, and microbatch $n$, with $N$ microbatches per logging step.

\paragraph{Microbatch variance.}
The microbatch variance measures the across-sample noise of the stochastic gradient at each parameter group:
\[
V^{\mathrm{mb}}_{r,\ell}
\;:=\;
\frac{1}{N}\sum_{n=1}^{N}\bigl\|g_{r,\ell,n}-\bar g_{r,\ell}\bigr\|_2^2,
\qquad
\bar g_{r,\ell}\;:=\;\frac{1}{N}\sum_{n=1}^{N}g_{r,\ell,n}.
\]
Summing over roles and layers gives the total
\[
V^{\mathrm{mb}}_{\mathrm{total}}
\;:=\;
\sum_{r=1}^{R}\sum_{\ell=1}^{L}V^{\mathrm{mb}}_{r,\ell},
\]
which is the empirical estimate of $\mathbb{E}\|U-\mathbb{E}U\|_2^2$ in the notation of Proposition~\ref{prop:variance_contraction}.

\paragraph{Depth variance.}
The depth variance measures the across-layer dispersion of the gradient within each role, evaluated per microbatch and averaged over microbatches:
\[
V^{\mathrm{depth}}_{r}
\;:=\;
\frac{1}{N}\sum_{n=1}^{N}\frac{1}{L}\sum_{\ell=1}^{L}\bigl\|g_{r,\ell,n}-\bar g_{r,\cdot,n}\bigr\|_2^2,
\qquad
\bar g_{r,\cdot,n}\;:=\;\frac{1}{L}\sum_{\ell=1}^{L}g_{r,\ell,n}.
\]
The total is $V^{\mathrm{depth}}_{\mathrm{total}}:=\sum_{r}V^{\mathrm{depth}}_{r}$, and is the empirical estimate of the quantity from Proposition~\ref{prop:Du_contraction}.

\paragraph{Observations.}
Figures~\ref{fig:microbatch_var_curves} and~\ref{fig:depth_var_curves} report $V^{\mathrm{mb}}_{\mathrm{total}}$ and $V^{\mathrm{depth}}_{\mathrm{total}}$ over training for the three runs. After the initial warmup transient (steps $\lesssim 1000$), both smoothing runs consistently exhibit lower gradient variance than baseline, in both metrics. The reduction is stable across the bulk of training and widens further in late training, consistent with Proposition~\ref{prop:variance_contraction} acting at every step along the smoothed trajectory.


\begin{figure}[h!]
    \centering
    \includegraphics[width=0.5\linewidth]{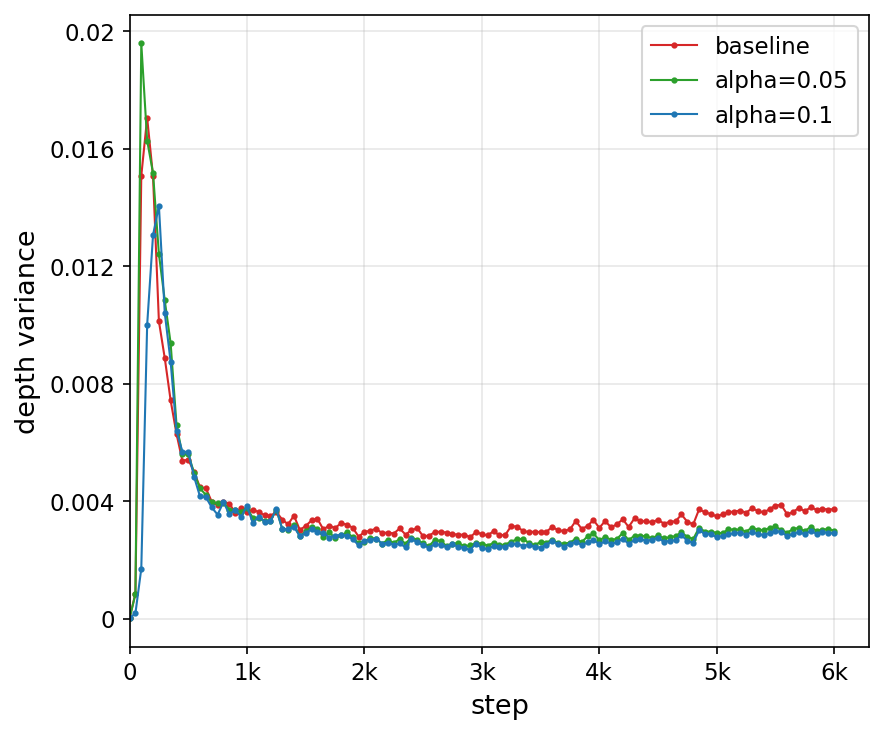}
    \caption{{\bf Lower depth gradient variance with smoothing.} Total depth variance $V^{\mathrm{depth}}_{\mathrm{total}}$ during \texttt{nanochat} d24 pretraining (7000 optimization steps, logged every 50 steps), comparing the baseline (blue) against window smoothing with $\alpha = 0.05$ (red) and $\alpha = 0.1$ (green). Both smoothing runs show consistently lower depth-wise gradient dispersion than baseline once training leaves the warmup phase, with $\alpha = 0.05$ and $\alpha = 0.1$ producing comparable reductions.}
    \label{fig:depth_var_curves}
\end{figure}

\end{document}